\documentclass[letterpaper]{article}
\usepackage[margin=1in,dvips]{geometry}
\usepackage{graphicx,psfrag,amssymb,amsthm,amsmath}
\usepackage{natbib}
\usepackage{url,color}




\newcommand{\F}{\ensuremath{\mathbf{F}}}

\newcommand{\J}{\ensuremath{\mathbf{J}}}

\newcommand{\W}{\ensuremath{\mathbf{W}}}
\newcommand{\X}{\ensuremath{\mathbf{X}}}
\newcommand{\Y}{\ensuremath{\mathbf{Y}}}
\newcommand{\Z}{\ensuremath{\mathbf{Z}}}

\newcommand{\f}{\ensuremath{\mathbf{f}}}
\newcommand{\g}{\ensuremath{\mathbf{g}}}


\newcommand{\w}{\ensuremath{\mathbf{w}}}
\newcommand{\x}{\ensuremath{\mathbf{x}}}
\newcommand{\y}{\ensuremath{\mathbf{y}}}
\newcommand{\z}{\ensuremath{\mathbf{z}}}
\newcommand{\0}{\ensuremath{\mathbf{0}}}


\newcommand{\blambda}{\ensuremath{\boldsymbol{\lambda}}}

\newcommand{\bTheta}{\ensuremath{\boldsymbol{\Theta}}}


\newcommand{\bbR}{\ensuremath{\mathbb{R}}}


\newcommand{\calI}{\ensuremath{\mathcal{I}}}

\newcommand{\calK}{\ensuremath{\mathcal{K}}}
\newcommand{\calL}{\ensuremath{\mathcal{L}}}
\newcommand{\calM}{\ensuremath{\mathcal{M}}}
\newcommand{\calN}{\ensuremath{\mathcal{N}}}



\newcommand{\abs}[1]{\left\lvert#1\right\rvert}
\newcommand{\norm}[1]{\left\lVert#1\right\rVert}


\newcommand{\caja}[4][1]{{%
    \renewcommand{\arraystretch}{#1}%
    \begin{tabular}[#2]{@{}#3@{}}%
      #4%
    \end{tabular}%
    }}



%
%
%
{%
\begin{list}{#1}{
\vspace{-\topsep}
\vspace{-\partopsep}
\setlength{\itemindent}{0cm}
\setlength{\rightmargin}{0cm}
\setlength{\listparindent}{0cm}
\settowidth{\labelwidth}{#1}
\setlength{\leftmargin}{\labelwidth}
\addtolength{\leftmargin}{\labelsep}
\setlength{\itemsep}{0cm}
}%
}%
{%
\end{list}
\vspace{-\topsep}
\vspace{-\partopsep}
}

%
%
{\begin{enumerate}%
}%
{\end{enumerate}}

\theoremstyle{plain}
\newtheorem{thm}{Theorem}[section]

\newtheorem*{lemma*}{Lemma}

\newtheorem*{prop*}{Proposition}

\theoremstyle{definition}

\newtheorem*{defn*}{Definition}

\newtheorem*{exmp*}{Example}

\newtheorem*{conj*}{Conjecture}

\theoremstyle{remark}

\newtheorem*{rmk*}{Remark}


\graphicspath{{grf/}}

\bibpunct[, ]{(}{)}{;}{a}{,}{,} 

\title{Distributed optimization of deeply nested systems}
\author{
  Miguel {\'A}.\ Carreira-Perpi{\~n}{\'a}n \hspace{5ex} Weiran Wang \\
  EECS, University of California, Merced \\
  {\small\url{http://eecs.ucmerced.edu}}
}
\date{December 24, 2012}

\begin{document}

\maketitle

\begin{abstract}
  
  In science and engineering, intelligent processing of complex signals such as images, sound or language is often performed by a parameterized hierarchy of nonlinear processing layers, sometimes biologically inspired. Hierarchical systems (or, more generally, nested systems) offer a way to generate complex mappings using simple stages. Each layer performs a different operation and achieves an ever more sophisticated representation of the input, as, for example, in an deep artificial neural network, an object recognition cascade in computer vision or a speech front-end processing. Joint estimation of the parameters of all the layers and selection of an optimal architecture is widely considered to be a difficult numerical nonconvex optimization problem, difficult to parallelize for execution in a distributed computation environment, and requiring significant human expert effort, which leads to suboptimal systems in practice. We describe a general mathematical strategy to learn the parameters and, to some extent, the architecture of nested systems, called the \emph{method of auxiliary coordinates (MAC)}. This replaces the original problem involving a deeply nested function with a constrained problem involving a different function in an augmented space without nesting. The constrained problem may be solved with penalty-based methods using alternating optimization over the parameters and the auxiliary coordinates. MAC has provable convergence, is easy to implement reusing existing algorithms for single layers, can be parallelized trivially and massively, applies even when parameter derivatives are not available or not desirable, and is competitive with state-of-the-art nonlinear optimizers even in the serial computation setting, often providing reasonable models within a few iterations.
  
\end{abstract}

\section{Introduction}

The continued increase in recent years in data availability and processing power has enabled the development and practical applicability of ever more powerful models in statistical machine learning, for example to recognize faces or speech, or to translate natural language \citep{Bishop06a}. However, physical limitations in serial computation suggest that scalable processing will require algorithms that can be massively parallelized, so they can profit from the thousands of inexpensive processors available in cloud computing. We focus on hierarchical processing architectures such as deep neural nets (fig.~\ref{f:deepnet}), which were originally inspired by biological systems such as the visual and auditory cortex in the mammalian brain \citep{RiesenPoggio99a,Serre_07a,GoldMorgan99a}, and which have been proven very successful at learning sophisticated tasks, such as recognizing faces or speech, when trained on data. A typical neural net defines a hierarchical, feedforward, parametric mapping from inputs to outputs. The parameters (\emph{weights}) are learned given a dataset by numerically minimizing an objective function. The outputs of the hidden units at each layer are obtained by transforming the previous layer's outputs by a linear operation with the layer's weights followed by a nonlinear elementwise mapping (e.g.\ sigmoid). Deep, nonlinear neural nets are universal approximators, that is, they can approximate any target mapping (from a wide class) to arbitrary accuracy given enough units \citep{Bishop06a}, and can have more representation power than shallow nets \citep{BengioLecun07a}. The hidden units may encode hierarchical, distributed features that are useful to deal with complex sensory data. For example, when trained on images, deep nets can learn low-level features such as edges and T-junctions and high-level features such as parts decompositions. Other examples of hierarchical processing systems are cascades for object recognition and scene understanding in computer vision \citep{Serre_07a,Ranzat_07b} or for phoneme classification in speech processing \citep{GoldMorgan99a,SaonChien12a}, wrapper approaches to classification or regression (e.g.\ based on dimensionality reduction; \citealp{WangCarreir12a}), or kinematic chains in robotics \citep{Craig04a}. These and other architectures share a fundamental design principle: \emph{mathematically, they construct a deeply nested mapping from inputs to outputs}.

The ideal performance of a nested system arises when all the parameters at all layers are jointly trained to minimize an objective function for the desired task, such as classification error (indeed, there is evidence that plasticity and learning probably occurs at all stages of the ventral stream of primate visual cortex; \citealp{RiesenPoggio99a,Serre_07a}). However, this is challenging because nesting (i.e., function composition) produces inherently nonconvex functions. Joint training is usually done with the backpropagation algorithm \citep{Rumelh_86c,Werbos74a}, which recursively computes the gradient with respect to each parameter using the chain rule. One can then simply update the parameters with a small step in the negative gradient direction as in gradient descent and stochastic gradient descent (SGD), or feed the gradient to a nonlinear optimization method that will compute a better search direction, possibly using second-order information \citep{OrrMueller98a}. This process is repeated until a convergence criterion is satisfied. Backprop in any of these variants suffers from the problem of vanishing gradients \citep{Roegnv94a,Erhan_09a}, where the gradients for lower layers are much smaller than those for higher layers, which leads to tiny steps, slowly zigzagging down a curved valley, and a very slow convergence. This problem worsens with the depth of the net and led researchers in the 1990s to give up in practice with nets beyond around two hidden layers (with special architectures such as convolutional nets \citep{Lecun_98a} being an exception) until recently, when improved initialization strategies \citep{HintonSalakh06a,Bengio_07a} and much faster computers---but not really any improvement in the optimization algorithms themselves---have renewed interest in deep architectures. Besides, backprop does not parallelize over layers (and, with nonconvex problems, is hard to parallelize over minibatches if using SGD), is only applicable if the mappings are differentiable with respect to the parameters, and needs careful tuning of learning rates. In summary, after decades of research in neural net optimization, simple backprop-based algorithms such as stochastic gradient descent remain the state-of-the-art, particularly when combined with good initialization strategies \citep{OrrMueller98a,HintonSalakh06a}. In addition, selecting the best architecture, for example the number of units in each layer of a deep net, or the number of filterbanks in a speech front-end processing, requires a combinatorial search. In practice, this is approximated with a manual trial-and-error procedure that is very costly in effort and expertise required, and leads to suboptimal solutions (where often the parameters of each layer are set irrespective of the rest of the cascade).

\begin{figure}[t]
  \begin{center}
    \psfrag{x}[l][lB]{\x}
    \psfrag{y}[l][lB]{\y}
    \psfrag{z1}[l][lB]{$\z_1$}
    \psfrag{z2}[l][lB]{$\z_2$}
    \psfrag{z3}[l][lB]{$\z_3$}
    \psfrag{W1}[][]{$\W_1$}
    \psfrag{W2}[][]{$\W_2$}
    \psfrag{W3}[][]{$\W_3$}
    \psfrag{W4}[][]{$\W_4$}
    \psfrag{s}[][lB]{$\sigma$}
    \includegraphics[width=0.22\textwidth]{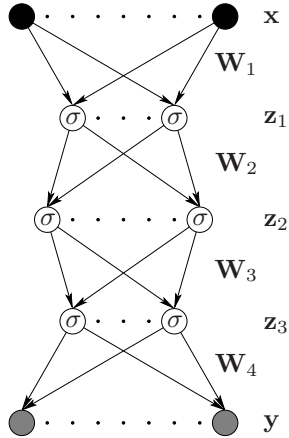}
    \caption{Net with $K=3$ hidden layers ($\z_k$: auxiliary coordinates, $\W_k$: weights).}
    \label{f:deepnet}
  \end{center}
\end{figure}

We describe a general optimization strategy for deeply nested functions that we call \emph{method of auxiliary coordinates (MAC)}, which partly alleviates the vanishing gradients problem, has embarrassing parallelization, and can reuse existing algorithms (possibly not gradient-based) that optimize single layers or individual units. Section~\ref{s:mac} describes MAC, section~\ref{s:related} describes related work, section~\ref{s:expts} gives experimental results that illustrate the different advantages of MAC, and the appendix gives formal theorem statements and proofs.

\section{The method of auxiliary coordinates (MAC)}
\label{s:mac}

\subsection{The nested objective function}

For definiteness, we describe the approach for a deep net such as that of fig.~\ref{f:deepnet}. Later sections will show other settings. Consider a regression problem of mapping inputs \x\ to outputs \y\ (both high-dimensional) with a deep net $\f(\x)$ given a dataset of $N$ pairs $(\x_n,\y_n)$. A typical objective function to learn a deep net with $K$ hidden layers has the form (to simplify notation, we ignore bias parameters):
\begin{equation}
  \label{e:nested}
  E_1(\W) = \frac{1}{2} \sum^N_{n=1}{\norm{\y_n - \f(\x_n;\W)}^2} \qquad \f(\x;\W) = \f_{K+1}(\dots \f_2(\f_1(\x;\W_1);\W_2)\dots;\W_{K+1})
\end{equation}
where each layer function has the form $\f_k(\x;\W_k) = \sigma(\W_k\x)$, i.e., a linear mapping followed by a squashing nonlinearity ($\sigma(t)$ applies a scalar function, such as the sigmoid $1/(1+e^{-t})$, elementwise to a vector argument, with output in $[0,1]$). Our method applies to loss functions other than squared error (e.g.\ cross-entropy for classification), with fully or sparsely connected layers each with a different number of hidden units, with weights shared across layers, and with regularization terms on the weights $\W_k$. The basic issue is the deep nesting of the mapping \f. The traditional way to minimize~\eqref{e:nested} is by computing the gradient over all weights of the net using backpropagation and feeding it to a nonlinear optimization method.

\subsection{The method of auxiliary coordinates (MAC)}

We introduce one auxiliary variable per data point and per hidden unit and define the following equality-constrained optimization problem:
\begin{equation}
  \label{e:mac}
  E(\W,\Z) = \frac{1}{2} \sum^N_{n=1}{\norm{\y_n - \f_{K+1}(\z_{K,n};\W_{K+1})}^2} \text{ s.t.\ }
  \renewcommand{\arraystretch}{0.5}
  \left\{
  \begin{array}{@{}l@{}}
    \z_{K,n} = \f_K(\z_{K-1,n};\W_K) \\ \dots \\ \z_{1,n} = \f_1(\x_n;\W_1)
  \end{array}
  \right\} n=1,\dots,N.
  \hspace{-3ex}
\end{equation}
Each $\z_{k,n}$ can be seen as the coordinates of $\x_n$ on an intermediate feature space, or as the hidden unit activations for $\x_n$. Intuitively, by eliminating \Z\ we see this is equivalent to the nested problem~\eqref{e:nested}; we can prove under very general assumptions that both problems have exactly the same minimizers (see appendix~\ref{s:proofs:equiv}). Problem~\eqref{e:mac} seems more complicated (more variables and constraints), but each of its terms (objective and constraints) involve only a small subset of parameters and no nested functions. Below we show this reduces the ill-conditioning caused by the nesting, and partially decouples many variables, affording an efficient and distributed optimization.

\subsection{MAC with quadratic-penalty (QP) optimization}

The problem~\eqref{e:mac} may be solved with a number of constrained optimization approaches. To illustrate the advantages of MAC in the simplest way, we use the quadratic-penalty (QP) method \citep{NocedalWright06a}. We optimize the following function over $(\W,\Z)$ for fixed $\mu>0$ and drive $\mu \rightarrow \infty$:
\begin{equation}
  \label{e:mac-quadpen}
  E_Q(\W,\Z;\mu) = \frac{1}{2} \sum^N_{n=1}{\norm{\y_n - \f_{K+1}(\z_{K,n};\W_{K+1})}^2} + \frac{\mu}{2} \sum^N_{n=1}{\sum^K_{k=1}{\norm{\z_{k,n} - \f_k(\z_{k-1,n};\W_k)}^2}}.
\end{equation}
This defines a continuous path $(\W^*(\mu),\Z^*(\mu))$ which, under some mild assumptions (see proof in appendix~\ref{s:proofs:QP}), converges to a minimum of the constrained problem~\eqref{e:mac}, and thus to a minimum of the original problem~\eqref{e:nested}. In practice, we follow this path loosely.

The QP objective function can be seen as breaking the functional dependences in the nested mapping \f\ and unfolding it over layers. Every squared term involves only a shallow mapping; all variables $(\W,\Z)$ are equally scaled, which improves the conditioning of the problem; and the derivatives required are simpler: we require no backpropagated gradients over \W, and sometimes no gradients over $\W_k$ at all. 

We now apply alternating optimization of the QP objective over \Z\ and \W:
\begin{description}
\item[\W-step] Minimizing over \W\ for fixed \Z\ results in a separate minimization over the weights of each hidden unit---each a single-layer, single-unit problem that can be solved with existing algorithms. Specifically, for the unit $(k,h)$, for $k=1,\dots,K+1$ (where we define $\z_{K+1,n} = \y_n$) and $h=1,\dots,H_k$ (assuming there are $H_k$ units in layer $k$), we have a nonlinear, least-squares regression of the form $\smash{\min_{\w_{kh}}{\sum^N_{n=1}{(z_{kh,n} - f_{kh}(\z_{k-1,n};\w_{kh}))}^2}}$, where $\w_{kh}$ is the weight vector ($h$th row of $\W_k$) that feeds into the $h$th output unit of layer $k$, and $z_{kh,n}$ the corresponding scalar target for point $\x_n$. 
\item[\Z-step] Minimizing over \Z\ for fixed \W\ separates over the coordinates $\z_n$ for each data point $n=1,\dots,N$ (omitting the subindex $n$ and weights):
\begin{equation}
  \label{e:Z-step}
  \min_{\z}{ \frac{1}{2} \norm{\y - \f_{K+1}(\z_K)}^2 + \frac{\mu}{2} \sum^K_{k=1}{\norm{\z_k - \f_k(\z_{k-1})}^2} }
\end{equation}
and can be solved using the derivatives w.r.t.\ \z\ of the single-layer functions $\f_1,\dots,\f_{K+1}$.
\end{description}
Thus, the \W-step results in many independent, single-layer single-unit problems that can be solved with existing algorithms, without extra programming cost. The \Z-step is new, however it always has the same form~\eqref{e:Z-step} of a ``generalized'' proximal operator \citep{Rockaf76b,CombetPesquet11a}. MAC reduces a complex, highly-coupled problem---training a deep net---to a sequence of simple, uncoupled problems (the \W-step) which are coordinated through the auxiliary variables (the \Z-step). For a large net with a large dataset, this affords an enormous potential for parallel, distributed computation. And, because each \W- or \Z-step operates over very large, decoupled blocks of variables, the decrease in the QP objective function is large in each iteration, unlike the tiny decreases achieved in the nested function. These large steps are effectively shortcuts through $(\W,\Z)$-space, instead of tiny steps along a curved valley in \W-space.

Rather than an algorithm, the method of auxiliary coordinates is a mathematical device to design optimization algorithms suited for any specific nested architecture, that are provably convergent, highly parallelizable and reuse existing algorithms for non-nested (or shallow) architectures. The key idea is the judicious elimination of subexpressions in a nested function via equality constraints. The architecture need not be strictly feedforward (e.g.\ recurrent nets). The designer need not introduce auxiliary coordinates at every layer: there is a spectrum between no auxiliary coordinates (full nesting), through hybrids that use some auxiliary coordinates and some semi-deep nets, to every single hidden unit having an auxiliary coordinate. An auxiliary coordinate may replace any subexpression of the nested function (e.g.\ the input to a hidden unit, rather than its output, leading to a quadratic \W-step). Other methods for constrained optimization may be used (e.g.\ the augmented Lagrangian rather than the quadratic-penalty method). Depending on the characteristics of the problem, the \W- and \Z-steps may be solved with any of a number of nonlinear optimization methods, from gradient descent to Newton's method, and using standard techniques such as warm starts, caching factorizations, inexact steps, stochastic updates using data minibatches, etc. In this respect, MAC is similar to other ``metaalgorithms'' such as expectation-maximization (EM) algorithms \citep{Dempst_77a} and alternating-direction method of multipliers \citep{Boyd_11a}, which have become ubiquitous in statistics, machine learning, optimization and other areas.

Fig.~\ref{f:usps} illustrates MAC learning for a sigmoidal deep autoencoder architecture, introducing auxiliary coordinates for each hidden unit at each layer (see section~\ref{s:expts:usps} for details). Classical backprop-based techniques such as stochastic gradient descent and conjugate gradients need many iterations to decrease the error, but each MAC/QP iteration achieves a large decrease, particularly at the beginning, so that it can reach a pretty good network pretty fast. While MAC/QP's serial performance is already remarkable, its parallel implementation achieves a linear speedup on the number of processors (fig.~\ref{f:parallel}).

\paragraph{Stopping criterion}

Exactly optimizing $E_Q(\W,\Z;\mu)$ for each $\mu$ follows the minima path strictly but is unnecessary, and one usually performs an inexact, faster optimization. Unlike in a general QP problem, in our case we have an accurate way to know when we should exit the optimization for a given $\mu$. Since our real goal is to minimize the nested error $E_1(\W)$, we exit when its value increases or decreases less than a set tolerance in relative terms. Further, as is common in neural net training, we use the validation error (i.e., $E_1(\W)$ measured on a validation set). This means we follow the path $(\W^*(\mu),\Z^*(\mu))$ not strictly but only inasmuch as we approach a nested minimum, and our approach can be seen as a sophisticated way of taking a descent step in $E_1(\W)$ but derived from $E_Q(\W,\Z;\mu)$. Using this stopping criterion maintains our theoretical convergence guarantees, because the path still ends in a minimum of $E_1$ and we drive $\mu\rightarrow\infty$.

\paragraph{The postprocessing step}

Once we have finished optimizing the MAC formulation with the QP method, we can apply a fast post-processing step that both reduces the objective function, achieves feasibility and eliminates the auxiliary coordinates. We simply satisfy the constraints by setting $\z_{kn} = \f_k(\z_{k-1,n};\W_k)$, $k=1,\dots,K$, $n=1,\dots,N$, and keep all the weights the same except for the last layer, where we set $\W_{K+1}$ by fitting $\f_{K+1}$ to the dataset $(\f_K(\dots(\f_1(\X))),\Y)$. One can prove the resulting weights reduce or leave unchanged the value of $E_1(\W)$.

\subsection{Jointly learning all the parameters in heterogeneous architectures}

Another important advantage of MAC is that it is easily applicable to heterogeneous architectures, where each layer may perform a particular type of processing for which a specialized training algorithm exists, possibly not based on derivatives over the weights (so that backprop is not applicable or not convenient). For example, a quantization layer of an object recognition cascade, or the nonlinear layer of a radial basis function (RBF) network, often use a $k$-means training to estimate the weights. Simply reusing this existing training algorithm as the \W-step for that layer allows MAC to learn jointly the parameters of the entire network with minimal programming effort, something that is not easy or not possible with other methods.

Fig.~\ref{f:coil} illustrates MAC learning for an autoencoder architecture where both the encoder and the decoder are RBF networks, introducing auxiliary coordinates only at the coding layer (see section~\ref{s:expts:coil} for details). In the \W-step, the basis functions of each RBF net are trained with $k$-means, and the weights in the remaining layers are trained by least-squares. As before, MAC/QP achieves a large error decrease in a few iterations.

\subsection{Model selection}

A final advantage of MAC is that it enables an efficient search not just over the parameter values of a given architecture, but (to some extent) over the architectures themselves. Traditional model selection usually involves obtaining optimal parameters (by running an already costly numerical optimization) for each possible architecture, and then evaluating each architecture based on a criterion such as cross-validation or a Bayesian Information Criterion (BIC), and picking the best \citep{Hastie_09a}. This discrete-continuous optimization involves training an exponential number of models, so in practice one settles with a suboptimal search (e.g.\ fixing by hand part of the architecture based on an expert's judgment, or selecting parts separately and then combining them). With MAC, model selection may be achieved ``on the fly'' by having the \W-step do model selection separately for each layer, and then letting the \Z-step coordinate the layers in the usual way. Specifically, consider a model selection criterion of the form $E_1(\W) + C(\W)$, where $E_1$ is the nested objective function~\eqref{e:nested} and $C(\W)$ is additive over the layers of the net:
\begin{equation}
  \label{e:model-selection-criterion}
  C(\W) = C_1(\W_1) + \dots + C_K(\W_K).
\end{equation}
This is satisfied by many criteria, such as BIC, AIC or minimum description length \citep{Hastie_09a}, in which $C(\W)$ is essentially proportional to the number of free parameters. While optimizing $E_1(\W) + C(\W)$ directly involves testing $M^K$ deep nets if we have $M$ choices for each layer, with MAC the \W-step separates over layers, and requires testing only $MK$ single-layer nets at each iteration. While these model selection tests are still costly, they may be run in parallel, and they need not be run at each iteration. That is, we may alternate between running multiple iterations that optimize \W\ for a given architecture, and running a model-selection iteration, and we still guarantee a monotonic decrease of $E_Q(\W) + C(\W)$. In practice, we observe that a near-optimal model is often found in early iterations. Thus, the ability of MAC to decouple optimizations reduces a search of an exponential number of complex problems to an iterated search of a linear number of simple problems.

Fig.~\ref{f:coil-model-selection} illustrates how to learn the architecture with MAC for the RBF autoencoder (see section~\ref{s:expts:coil-model-selection} for details) by trying $50$ different values for the number of basis functions in each of the encoder and decoder (a search space of $50^2 = 2\,500$ architectures). Because, early during the optimization, MAC/QP settles on an architecture that is quite smaller than the one used in fig.~\ref{f:coil}, the result is in fact achieved in even less time.

\section{Related work}
\label{s:related}

We believe we are the first to propose the MAC formulation in full generality for nested function learning as a provably equivalent, constrained problem that is to be optimized jointly in the space of parameters and auxiliary coordinates using quadratic-penalty, augmented Lagrangian or other methods. However, there exist several lines of work related to it, and MAC/QP can be seen as giving a principled setting that justifies previous heuristic but effective approaches, and opening the door for new, principled ways of training deep nets and other nested systems.

Updating the activations of hidden units separately from the weights of a neural net has been done in the past, from early work in neural nets \citep{Grossm_88a,SaadMarom90a,Krogh_90a,Rohwer90a} to recent work in learning sparse features \citep{OlshausField96a,OlshausField97a,Ranzat_07a,Kavukc_08a} and dimensionality reduction \citep{CarreirLu08a,CarreirLu10a,CarreirLu11a,WangCarreir12a}. Interest in using the activations of neural nets as independent variables goes back to the early days of neural nets, where learning good internal representations was as important as learning good weights \citep{Grossm_88a,SaadMarom90a,Krogh_90a,Rohwer90a}. In fact, backpropagation was presented as a method to construct good internal representations that represent important features of the task domain \citep{Rumelh_86d}. This necessarily requires dealing explicitly with the hidden activations. Thus, while several papers proposed objective functions of both the weights and the activations, these were not intended to solve the nested problem or to achieve distributed optimization, but to help learn good representations. These algorithms typically did not converge at all, or did not converge to a solution of the nested problem, and were developed for a single-hidden-layer net and tested in very small problems. More recent variations have similar problems \citep{Ma_97a,Castil_06a,Erdogm_05a}. Nearly all this early work has focused on the case of a single hidden layer, which is easy enough to train by standard methods, so that no great advantage is obtained, and it does not reveal the parallel processing aspects of the problem, which become truly important in the deep net case. When extracting features and using overcomplete dictionaries, sparsity is often encouraged, which sometimes requires an explicit penalty over the features, but this has only been considered for a single layer (the one that extracts the features) and again does not minimize the nested problem \citep{OlshausField96a,OlshausField97a,Ranzat_07a,Kavukc_08a}. Some work for a single hidden layer net mentions the possibility of recovering backpropagation in a limit \citep{Krogh_90a,Kavukc_08a}, but this is not used to construct an algorithm that converges to a nested problem optimum. Recent works in deep net learning, such as pretraining \citep{HintonSalakh06a} or greedy layerwise training \citep{Bengio_07a,Ngiam_11a}, do a single pass over the net from the input to the output layer, fixing the weights of each layer sequentially, but without optimizing a joint objective of all weights. While these heuristics can be used to achieve good initial weights, they do not converge to a minimum of the nested problem.

Auxiliary variables have been used before in statistics and machine learning, from early work in factor and homogeneity analysis \citep{Gifi90a}, to learn dimensionality reduction mappings given a dataset of high-dimensional points $\x_1,\dots,\x_N$. Here, one takes the latent coordinates $\z_n$ of each data point $\x_n$ as parameters to be estimated together with the reconstruction mapping \f\ that maps latent points to data space and minimize a least-squares error function $\smash{\sum^N_{n=1}{\norm{\x_n - \f(\z_n)}^2}}$, often by alternating over \f\ and \Z. Various nonlinear versions of this approach exist where \f\ is a spline \citep{LeblanTibshir94a}, single-layer neural net \citep{TanMavrov95a}, radial basis function net \citep{Smola_01a}, kernel regression \citep{Meinic_05a} or Gaussian process \citep{Lawren05a}. However, particularly with nonparametric functions, the error can be driven to zero by separating infinitely apart the \Z, and so these methods need ad-hoc terms on \Z\ to prevent this. The dimensionality reduction by unsupervised regression approach of \citet{CarreirLu08a,CarreirLu10a,CarreirLu11a} (generalized to supervised dimensionality reduction in \citealp{WangCarreir12a}) solves this by optimizing instead $\smash{\sum^N_{n=1}{\norm{\z_n - \F(\x_n)}^2 + \norm{\x_n - \f(\z_n)}^2}}$ jointly over \Z, \f\ and the projection mapping \F\ (both RBF networks). This can be seen as a truncated version of our quadratic-penalty approach, where $\mu$ is kept constant, and limited to a single-hidden-layer net. Therefore, the resulting estimate for the nested mapping $\f(\F(\x))$ is biased, as it does not minimize the nested error.

In summary, these works were typically concerned with single-hidden-layer architectures, and did not solve the nested problem~\eqref{e:nested}. Instead, their goal was to define a different problem (having a different solution): one where the designer has explicit control over the net's internal representations or features (e.g.\ to encourage sparsity or some other desired property). In MAC, the auxiliary coordinates are purely a mathematical construct to solve a well-defined, general nested optimization problem, with embarrassing parallelism suitable for distributed computation, and is not necessarily related to learning good hidden representations. Also, none of these works realize the possibility of using heterogeneous architectures with layer-specific algorithms, or of learning the architecture itself by minimizing a model selection criterion that separates in the \W-step.

Finally, the MAC formulation is similar in spirit to the alternating direction method of multipliers (ADMM) \citep{Boyd_11a,CombetPesquet11a} in that variables (the auxiliary coordinates) are introduced that decouple terms. However, ADMM splits an existing variable that appears in multiple terms of the objective function (which then decouple) rather than a functional nesting, for example $\min_{\x}{f(\x)+g(\x)}$ becomes $\min_{\x,\y}{f(\x)+g(\y)}$ s.t.\ $\x=\y$, or \x\ is split into non-negative and non-positive parts. In contrast, MAC introduces new variables to break the nesting. ADMM is known to be very simple, effective and parallelizable, and to be able to achieve a pretty good estimate pretty fast, thanks to the decoupling introduced and the ability to use existing optimizers for the subproblems that arise. MAC also has these characteristics with problems involving function nesting.

\section{Experiments}
\label{s:expts}

Section~\ref{s:expts:opt} describes how we implemented the \W- and \Z-steps and sections~\ref{s:expts:usps}, \ref{s:expts:coil} and~\ref{s:expts:coil-model-selection} show how MAC can learn a homogeneous architecture (deep sigmoidal autoencoder), a heterogeneous architecture (RBF autoencoder) and the architecture itself, respectively. In all cases, we show the speedup achieved with a parallel implementation of MAC as well.

\subsection{Optimization of the MAC-constrained problem using a quadratic penalty}
\label{s:expts:opt}

We apply alternating optimization of the QP objective~\eqref{e:mac-quadpen} over \Z\ and \W:
\begin{description}
\item[\W-step] Minimizing over \W\ for fixed \Z\ results in a separate nonlinear, least-squares regression of the form $\smash{\min_{\w_{kh}}{\sum^N_{n=1}{(z_{kh,n} - f_{kh}(\z_{k-1,n};\w_{kh}))}^2}}$ for $k=1,\dots,K+1$ (where we define $\z_{K+1,n} = \y_n$) and $h=1,\dots,H$, where $\w_{kh}$ is the weight vector ($h$th row of $\W_k$) that feeds into the $h$th output unit of layer $k$ (assuming there are $H$ such units), and $z_{kh,n}$ the corresponding scalar target for point $\x_n$. We solve each of these $KH$ problems with a Gauss-Newton approach \citep{NocedalWright06a}, which essentially approximates the Hessian by linearizing the function $f_{kh}$, solves a linear system of $H \times H$ (the number of units feeding into $z_{kh}$) to get a search direction, does a line search (we use backtracking with initial step size $1$), and iterates. In practice 1--2 iterations converge with high tolerance.
\item[\Z-step] Minimizing over \Z\ for fixed \W\ separates over each $\z_n$ for $n=1,\dots,N$. The problem is also a nonlinear least-squares fit, formally very similar to those of the \W-step, because \Z\ and \W\ enter the objective function in a nearly symmetric way through $\sigma(\W_k\z_{k-1})$, but with additional quadratic terms $\smash{\norm{\z_{k,n} - \dots}^2}$. We optimize it again with the Gauss-Newton method, which usually spends 1--2 iterations.
\end{description}
This optimization of the MAC-constrained problem, based on a quadratic-penalty method with Gauss-Newton steps, produces reasonable results and is simple to implement, but it is not intended to be particularly efficient. A more efficient optimization can be achieved by (1) using other methods for constrained optimization, such as the augmented Lagrangian method instead of the quadratic penalty method; and (2) by using more efficient \W- or \Z-steps, by combining standard techniques (inexact steps with warm starts, caching factorizations, stochastic updates using data minibatches, etc.\@) with unconstrained optimization methods such as L-BFGS, conjugate gradients, gradient descent, alternating optimization or others. Exploring this is a topic of future research.

\paragraph{Parallel implementation of MAC/QP}

Our parallel implementation of MAC/QP is extremely simple at present, yet it achieves large speedups (about $6 \times$ faster if using 12 processors), which are nearly linear as a function of the number of processors for all experiments, as shown in fig.~\ref{f:parallel}. Given that our code is in Matlab, we used the Matlab Parallel Processing Toolbox. The programming effort is insignificant: all we do is replace the ``for'' loop over weight vectors (in the \W-step) or over auxiliary coordinates (in the \Z-step) with a ``parfor'' loop. Matlab then sends each iteration of the loop to a different processor. We ran this in a shared-memory multiprocessor machine%
\footnote{An Aberdeen Stirling 148 computer having 4 physical CPUs (Intel Xeon CPU L7555@ 1.87GHz), each with 8 individual processing cores (thus a total of 32 actual processors), and a total RAM size of 64 GB.},
using up to 12 processors (a limit imposed by our Matlab license) and obtained the results reported in the paper. While simple, the Matlab Parallel Processing Toolbox is quite inefficient. Larger speedups would be achievable with other parallel computation models such as MPI in C, and using a distributed architecture (so that cache and other overheads are reduced).

\subsection{Homogeneous training: deep sigmoidal autoencoder}
\label{s:expts:usps}

We use a dataset of handwritten digit images to train a deep autoencoder architecture that maps the input image to a low-dimensional coding layer and then tries to reconstruct the image from it. We used MAC/QP introducing auxiliary coordinates for each hidden unit at each layer. Fig.~\ref{f:usps} shows the learning curves.

The USPS dataset \citep{Hull94a}, a commonly used machine learning benchmark, contains $16\times 16$ grayscale images of handwritten digits, i.e., 256D vectors with values in $[0,1]$. We use $N=5\,000$ images for training and $5\,000$ for validation, both randomly selected equally over all digits.

The autoencoder architecture is 256--300--100--20--100--300--256, for a total of over $200\,000$ weights, with all $K=5$ hidden layers being logistic sigmoid units and the output layer being linear. The initial weights are uniformly sampled from $[-1/\sqrt{f_k},1/\sqrt{f_k}]$ for layers $k=1,\dots,K$, respectively, where $f_k$ is the input dimension (fan-in) to each hidden layer \citep{OrrMueller98a}. When using initial random weights, a large (but easy) decrease in the error can be achieved simply by adjusting the biases of the output layer so the network output matches the mean of the target data, and all algorithms attain this in the first few iterations, giving the impression of a large error decrease followed by a very slow subsequent decrease. To focus the plots on the subsequent, more difficult error decreases, we always apply a single step of gradient descent to the random initial weights, which mostly adjusts the biases as described. The resulting weights are given as initial weights to each optimization method.

MAC/QP runs the \Z-step with 1 Gauss-Newton iteration and the \W-step with up to 3 Gauss-Newton iterations. We also use a small regularization on \W\ in the first few iterations, which we drop once $\mu > 10^4$, since we find that this tends to lead to a better local optimum. For each value of $\mu$, we optimize $E_Q(\W,\Z;\mu)$ in an inexact way as described in section~\ref{s:mac}, exiting when the value of the nested error $E_1(\W)$, evaluated on a validation set, increases or decreases less than a set tolerance in relative terms. We use a tolerance of $10^{-2}$ and increase $\mu$ rather aggressively, to $10 \mu$.

We show the learning curves for two classical backprop-based techniques, stochastic gradient descent and conjugate gradients. Stochastic gradient descent (SGD) has several parameters that should be carefully set by the user to ensure that convergence occurs, and that convergence is as fast as possible. We did a grid search for the minibatch size in $\{1, 10, 20, 50, 100, 200, 500, 1000, 5000\}$ and learning rate in $\{1, 10, 100, 1000\} \times 10^{-7}$. We found minibatches of 20 samples and a learning rate of $10^{-6}$ were best. We randomly permute the training set at the beginning of each epoch. For nonlinear conjugate gradients (CG), we used the Polak-Ribi{\`e}re version, widely regarded as the best \citep{NocedalWright06a}. We use Carl Rasmussen's implementation \url{minimize.m} (available at \url{http://learning.eng.cam.ac.uk/carl/code/minimize/minimize.m}), which uses a line search based on cubic interpolation that is more sophisticated than backtracking, and allows steps longer than 1. We found a minibatch of $N$ (i.e., batch mode) worked best. We used restarts every 100 steps. 

Figure~\ref{f:usps}(left) plots the mean squared training error for the nested objective function~\eqref{e:nested} vs run time for the different algorithms after the initialization. The validation error follows closely the training error and is not plotted. Markers are shown every iteration (MAC), every 100 iterations (CG), or every 20 epochs (SGD, one epoch is one pass over the training set). The change points of the quadratic penalty parameter $\mu$ are indicated using filled red circles at the beginning of each iteration when they happened. The learning curve of the parallelized version of MAC (using 12 processors) is also shown in blue. Fig.~\ref{f:usps}(right) shows some USPS images and their reconstruction at the end of training for each algorithm.

SGD and CG need many iterations to decrease the error, but each MAC/QP iteration achieves a large decrease, particularly at the beginning, so that it can reach a pretty good network pretty fast. While MAC/QP's serial performance is already remarkable, its parallel implementation achieves a linear speedup on the number of processors (fig.~\ref{f:parallel}).

\newlength{\MACPlengthA}
\setlength{\MACPlengthA}{0.08\linewidth}
\begin{figure}[t]
  \begin{center}
    \begin{tabular}[c]{@{}c@{\hspace{0.08\linewidth}}c@{}}
      \begin{tabular}[c]{@{}c@{}}
        \psfrag{mu0}{{\tiny $\mu=1$}}
        \psfrag{mu1}[][]{{\tiny $\mu=10^1$}}
        \psfrag{mu2}[][]{{\tiny $10^2$}}
        \psfrag{mu3}[][]{{\tiny $10^3$}}
        \psfrag{mu4}[][]{{\tiny $10^4$}}
        \psfrag{mu6}[][]{{\tiny $10^6$}}
        \psfrag{mu7}[][]{{\tiny $10^7$}}
        \psfrag{mu8}[][]{{\tiny $10^8$}}
        \psfrag{mse}[][]{objective function}
        \psfrag{time}[][]{runtime (hours)}
        \psfrag{MAC}{MAC}
        \psfrag{CG}{CG}
        \psfrag{SGD}{SGD}
        \psfrag{Parallel MAC}{Parallel MAC}
        \includegraphics[width=0.48\linewidth]{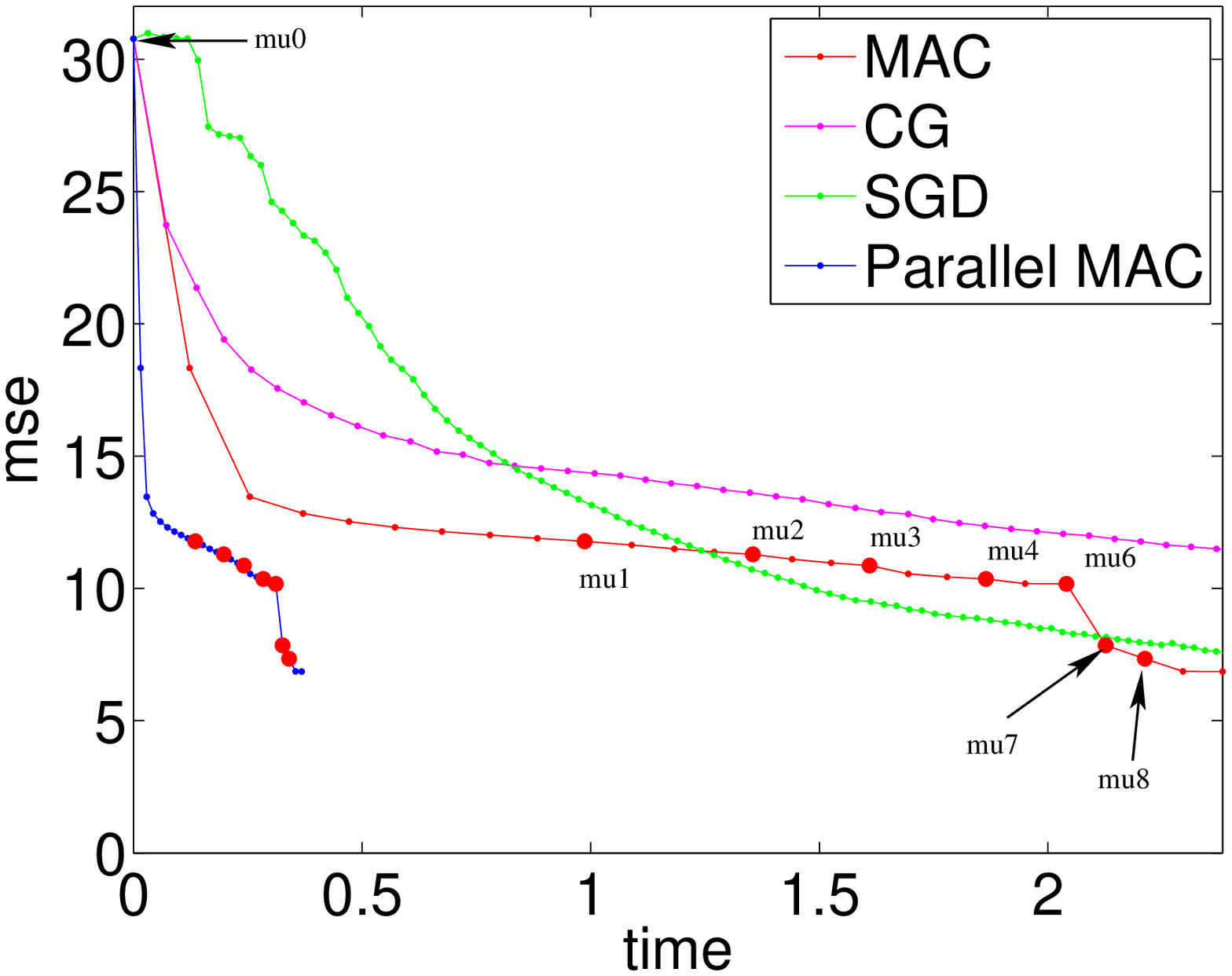}
      \end{tabular} &
      \begin{tabular}{@{}c@{\hspace{0\linewidth}}c@{\hspace{0\linewidth}}c@{\hspace{0\linewidth}}c@{\hspace{0\linewidth}}c@{\hspace{0\linewidth}}c@{}}
        \rotatebox{90}{\makebox[\MACPlengthA][c]{\caja[0.5]{c}{c}{Ground \\ truth}}} &
        \includegraphics[width=\MACPlengthA,bb=280 371 330 421,clip]{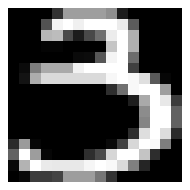} &
        \includegraphics[width=\MACPlengthA,bb=280 371 330 421,clip]{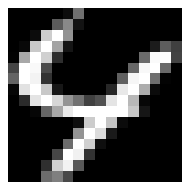} &  
        \includegraphics[width=\MACPlengthA,bb=280 371 330 421,clip]{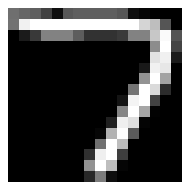} &
        \includegraphics[width=\MACPlengthA,bb=280 371 330 421,clip]{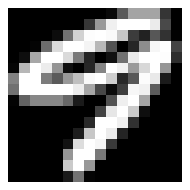} &
        \includegraphics[width=\MACPlengthA,bb=280 371 330 421,clip]{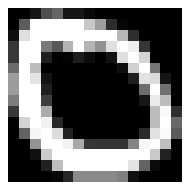} \\[-0.5ex]
        \rotatebox{90}{\makebox[\MACPlengthA][c]{MAC}} &
        \includegraphics[width=\MACPlengthA,bb=280 371 330 421,clip]{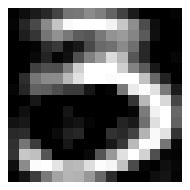} &
        \includegraphics[width=\MACPlengthA,bb=280 371 330 421,clip]{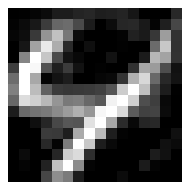} &  
        \includegraphics[width=\MACPlengthA,bb=280 371 330 421,clip]{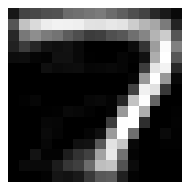} &
        \includegraphics[width=\MACPlengthA,bb=280 371 330 421,clip]{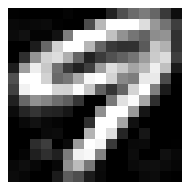} &
        \includegraphics[width=\MACPlengthA,bb=280 371 330 421,clip]{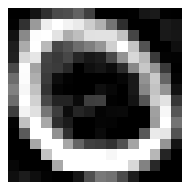} \\[-0.5ex]
        \rotatebox{90}{\makebox[\MACPlengthA][c]{CG}} &
        \includegraphics[width=\MACPlengthA,bb=280 371 330 421,clip]{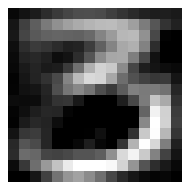} &
        \includegraphics[width=\MACPlengthA,bb=280 371 330 421,clip]{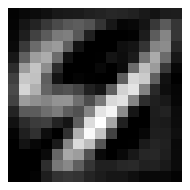} &  
        \includegraphics[width=\MACPlengthA,bb=280 371 330 421,clip]{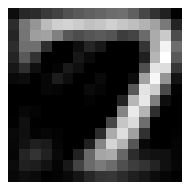} &
        \includegraphics[width=\MACPlengthA,bb=280 371 330 421,clip]{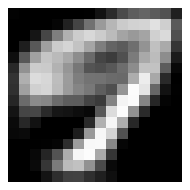} &
        \includegraphics[width=\MACPlengthA,bb=280 371 330 421,clip]{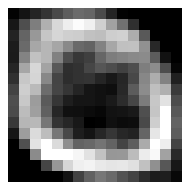} \\[-0.5ex]
        \rotatebox{90}{\makebox[\MACPlengthA][c]{SGD}} &
        \includegraphics[width=\MACPlengthA,bb=280 371 330 421,clip]{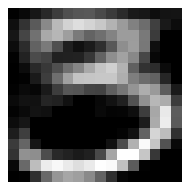} &
        \includegraphics[width=\MACPlengthA,bb=280 371 330 421,clip]{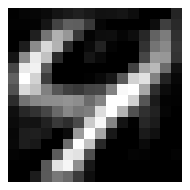} &  
        \includegraphics[width=\MACPlengthA,bb=280 371 330 421,clip]{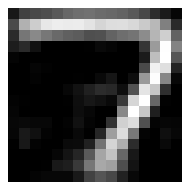} &
        \includegraphics[width=\MACPlengthA,bb=280 371 330 421,clip]{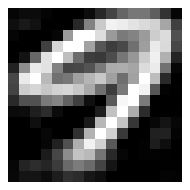} &
        \includegraphics[width=\MACPlengthA,bb=280 371 330 421,clip]{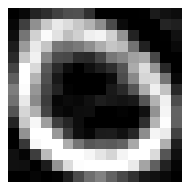}
      \end{tabular}
    \end{tabular}
    \caption{Training a deep autoencoder to reconstruct images of handwritten digits. \emph{Left}: nested function error~\eqref{e:nested} for each algorithm, with markers shown every iteration (MAC), every 100 iterations (CG), or every 20 epochs (SGD). For MAC/QP, we incremented the quadratic penalty $\mu$ as indicated at the red solid markers. All these experiments were run in the same computer using a single processor, except the parallel MAC training curve, which used 12 processors sharing the same memory using the Matlab Parallel Processing Toolbox. \emph{Right}: reconstruction of sample training images by different methods.}
    \label{f:usps}
  \end{center}
\end{figure}

\subsection{Heterogeneous training: radial basis functions autoencoder}
\label{s:expts:coil}

\begin{figure}[t]
  \begin{center}
    \begin{tabular}[c]{@{}c@{\hspace{0.08\linewidth}}c@{}}
      \begin{tabular}[c]{@{}c@{}}
        \psfrag{mu1}{{\scriptsize $\mu=1$}}
        \psfrag{mu5}{{\scriptsize $\mu=5$}}
        \psfrag{obj}[][]{objective function}
        \psfrag{time}[][]{runtime (hours)}
        \psfrag{MAC}{MAC}
        \psfrag{ALT. OPT}{Alt.~opt.}
        \psfrag{Parallel MAC}{Parallel MAC}
        \includegraphics[width=0.48\linewidth]{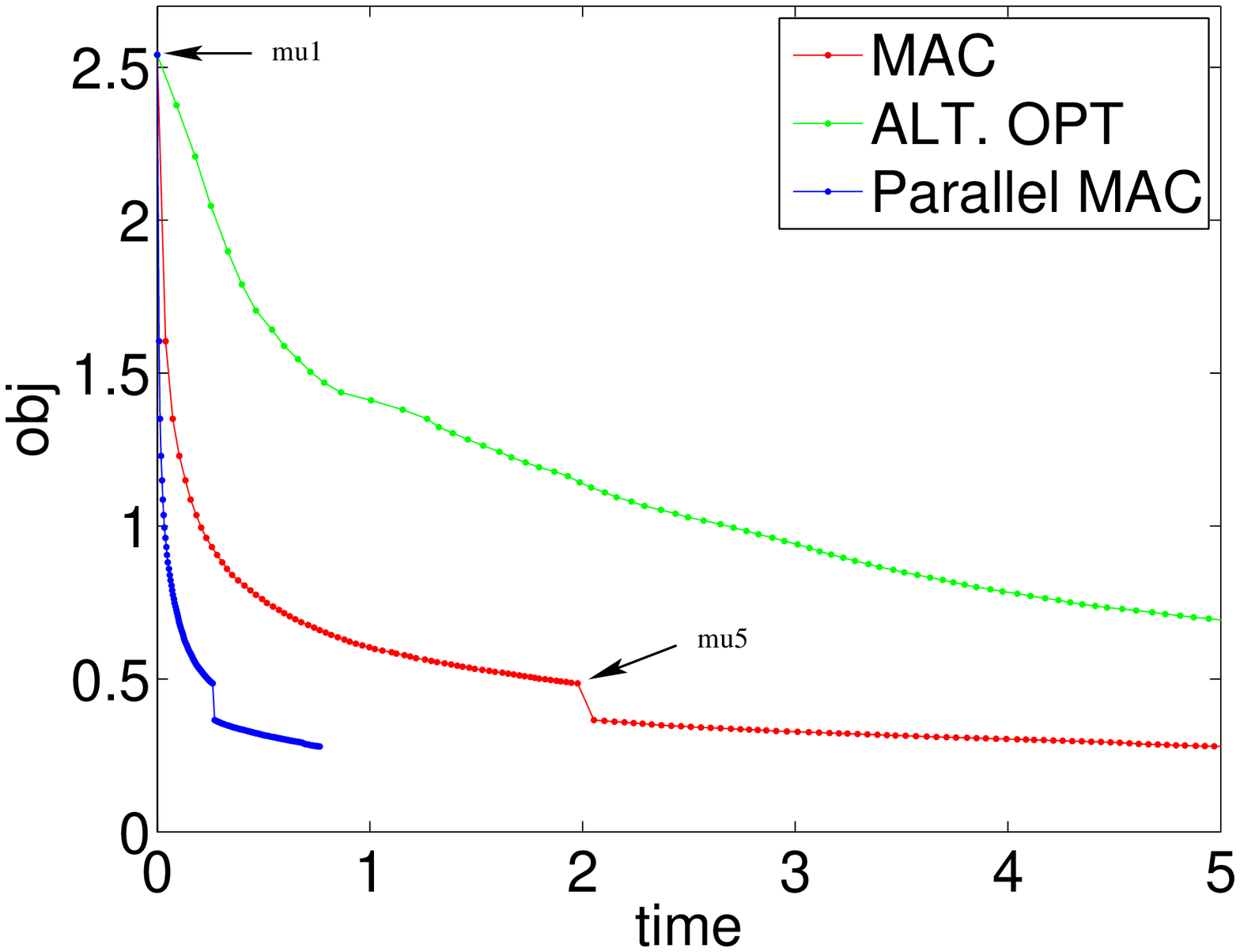}
      \end{tabular} &
      \small
      \begin{tabular}{@{}c@{\hspace{0\linewidth}}c@{\hspace{0\linewidth}}c@{\hspace{0\linewidth}}c@{\hspace{0\linewidth}}c@{\hspace{0\linewidth}}c@{}}
        \rotatebox{90}{\makebox[\MACPlengthA][c]{\caja[0.5]{c}{c}{Ground \\ truth}}} &
        \includegraphics[width=\MACPlengthA,bb=259 365 350 455,clip]{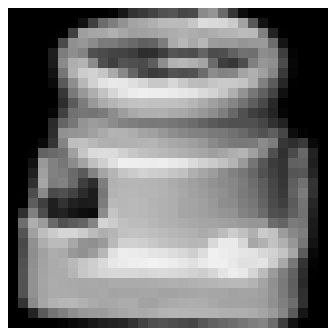} &
        \includegraphics[width=\MACPlengthA,bb=259 365 350 455,clip]{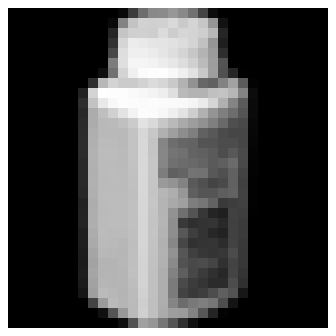} &  
        \includegraphics[width=\MACPlengthA,bb=259 365 350 455,clip]{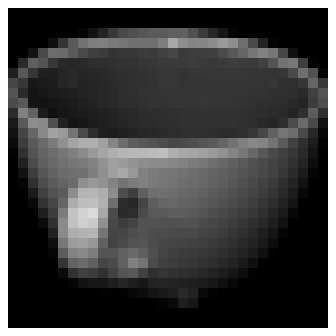} &
        \includegraphics[width=\MACPlengthA,bb=259 365 350 455,clip]{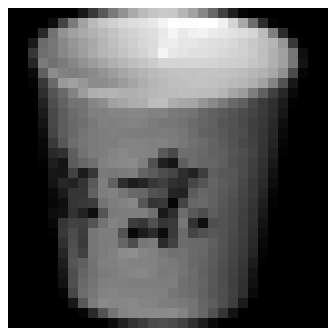} &
        \includegraphics[width=\MACPlengthA,bb=259 365 350 455,clip]{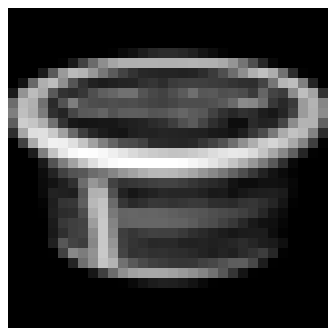} \\[-0.5ex]
        \rotatebox{90}{\makebox[\MACPlengthA][c]{\caja[0.5]{c}{c}{Initial \\ parameters}}} &
        \includegraphics[width=\MACPlengthA,bb=259 365 350 455,clip]{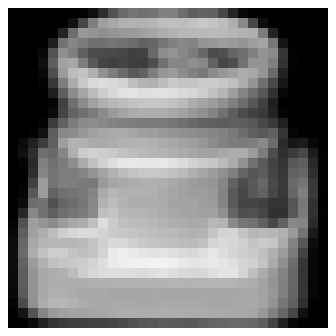} &
        \includegraphics[width=\MACPlengthA,bb=259 365 350 455,clip]{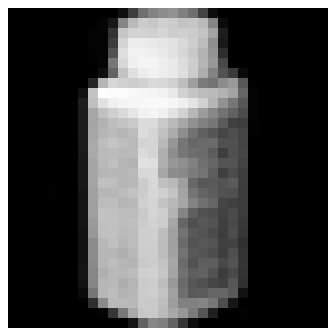} &  
        \includegraphics[width=\MACPlengthA,bb=259 365 350 455,clip]{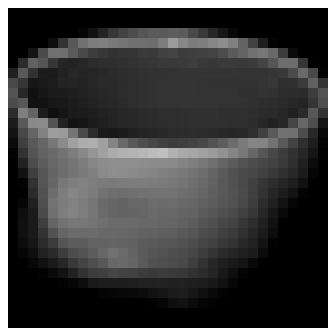} &
        \includegraphics[width=\MACPlengthA,bb=259 365 350 455,clip]{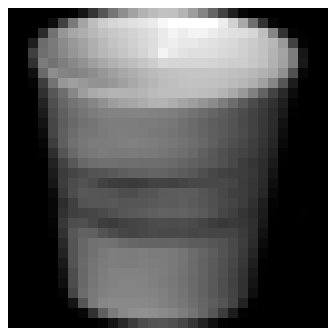} &
        \includegraphics[width=\MACPlengthA,bb=259 365 350 455,clip]{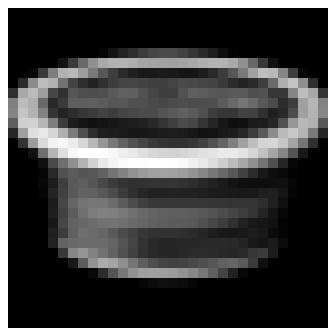} \\[-0.5ex]
        \rotatebox{90}{\makebox[\MACPlengthA][c]{MAC}} &
        \includegraphics[width=\MACPlengthA,bb=259 365 350 455,clip]{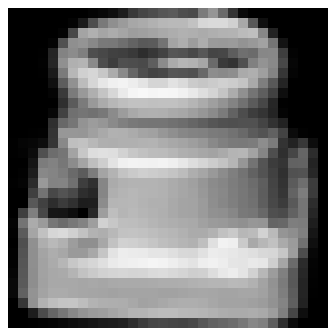} &
        \includegraphics[width=\MACPlengthA,bb=259 365 350 455,clip]{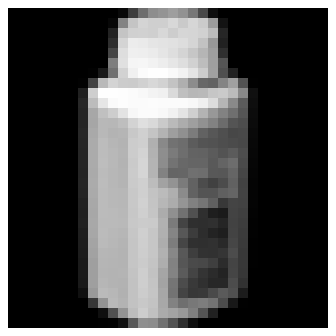} &  
        \includegraphics[width=\MACPlengthA,bb=259 365 350 455,clip]{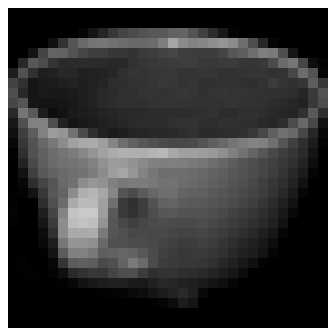} &
        \includegraphics[width=\MACPlengthA,bb=259 365 350 455,clip]{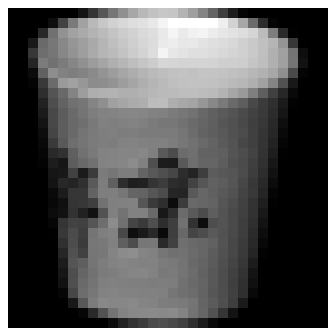} &
        \includegraphics[width=\MACPlengthA,bb=259 365 350 455,clip]{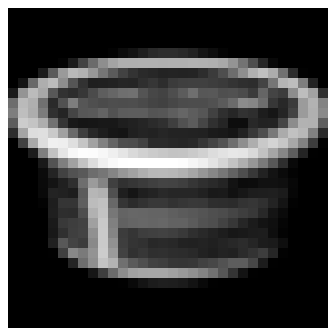} \\[-0.5ex]
        \rotatebox{90}{\makebox[\MACPlengthA][c]{Alt.~opt.}} &
        \includegraphics[width=\MACPlengthA,bb=259 365 350 455,clip]{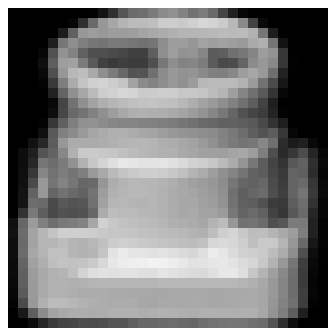} &
        \includegraphics[width=\MACPlengthA,bb=259 365 350 455,clip]{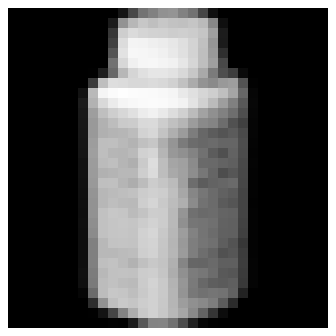} &  
        \includegraphics[width=\MACPlengthA,bb=259 365 350 455,clip]{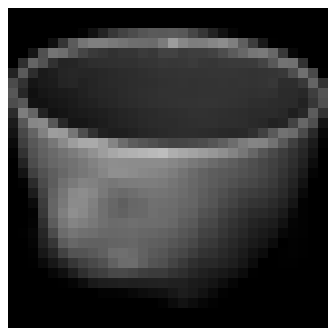} &
        \includegraphics[width=\MACPlengthA,bb=259 365 350 455,clip]{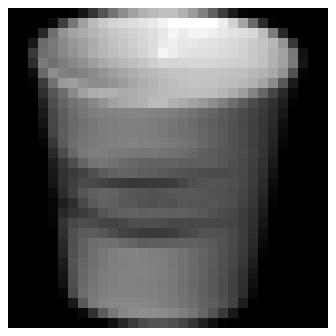} &
        \includegraphics[width=\MACPlengthA,bb=259 365 350 455,clip]{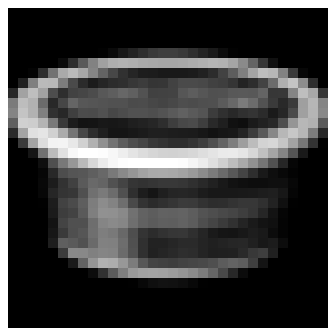}
      \end{tabular}
    \end{tabular}
    \caption{Training an RBF autoencoder to reconstruct images of objects. \emph{Left}: nested function error~\eqref{e:nested} for each algorithm, with markers shown every iteration (MAC, alternating optimization). Other details as in fig.~\ref{f:usps}. \emph{Right}: reconstruction of sample training images by different methods.}
    \label{f:coil}
  \end{center}
\end{figure}

\begin{figure}[t!]
  \begin{center}
    \begin{tabular}{@{}c@{}c@{}}
      \includegraphics[width=0.50\linewidth]{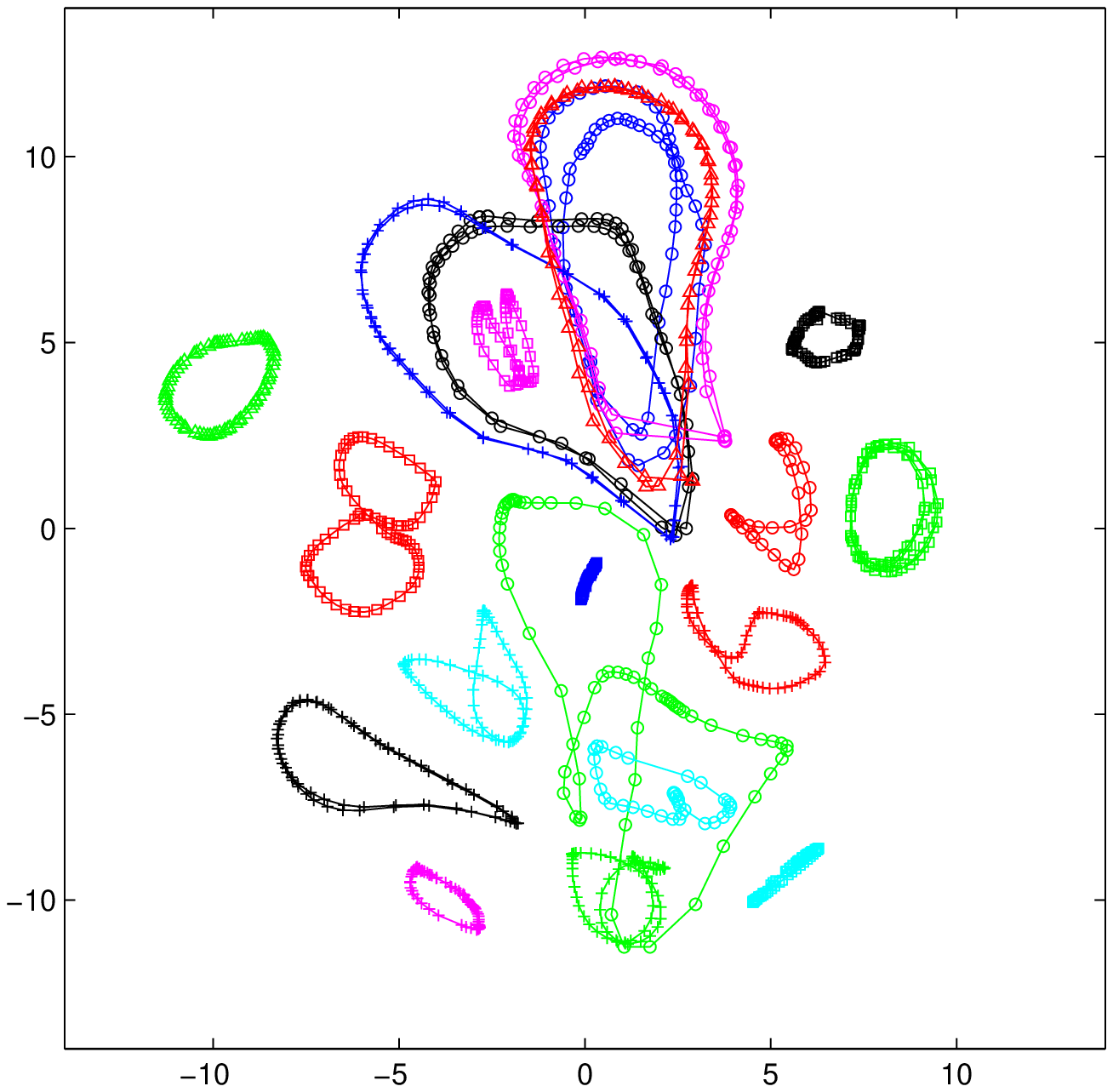} &
      \includegraphics[width=0.50\linewidth]{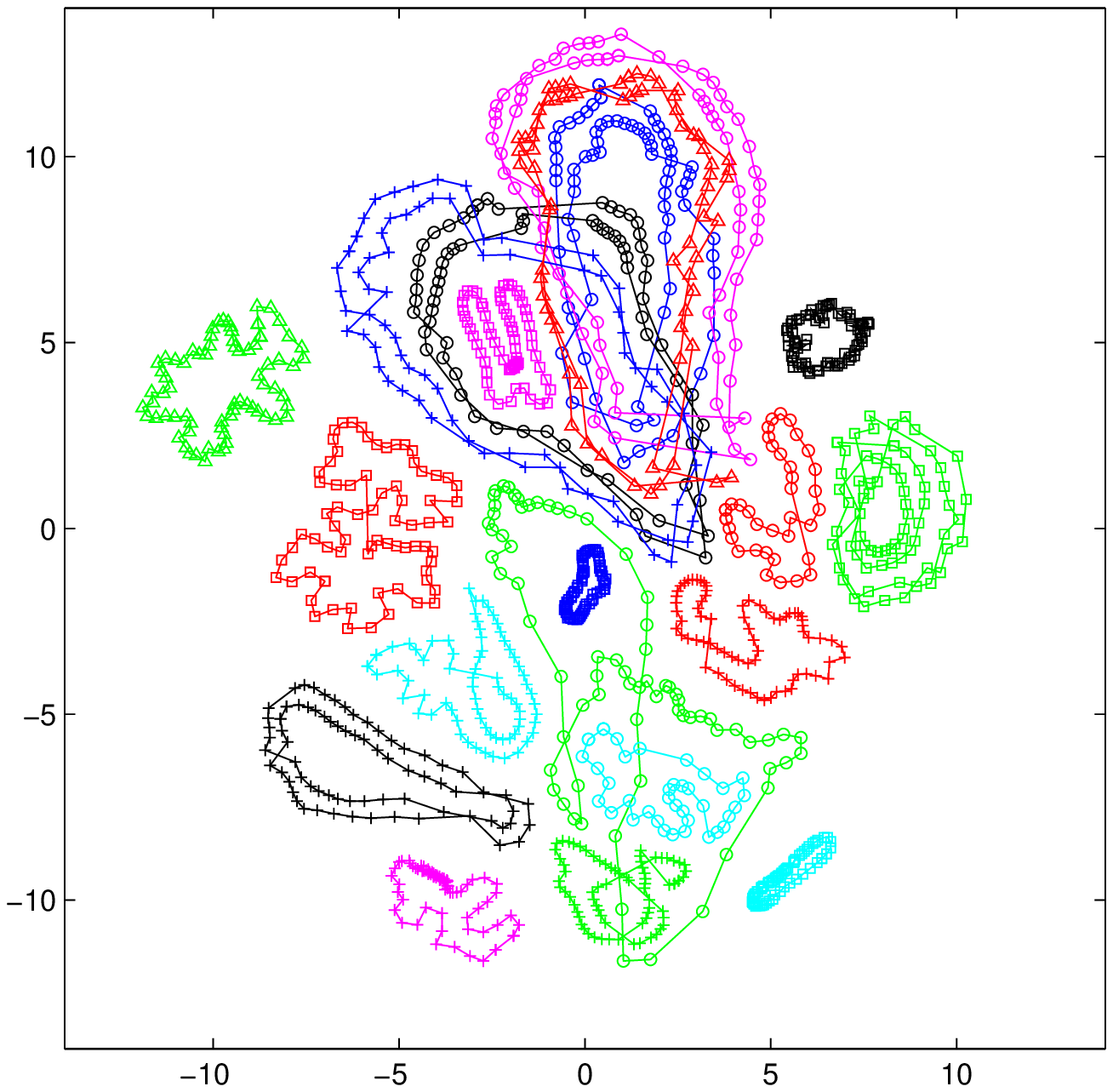}
    \end{tabular}
    \caption{Values of the \Z\ coordinates (2D latent space) for the COIL--20 data set. \emph{Left}: initialization obtained from the elastic embedding algorithm. \emph{Right}: results from MAC at the end of training (section~\ref{s:expts:coil}).}
    \label{f:coil_embeddings}
  \end{center}
\end{figure}

We use a dataset of object images to train an autoencoder architecture where both the encoder and decoder are RBF networks but, rather than using a gradient-based optimization for each subnet, we use $k$-means to train the basis functions of each RBF in the \W-step (while the weights in the remaining layers are trained by least-squares). We used MAC/QP introducing auxiliary coordinates only at the coding layer. Backprop-based algorithms are incompatible with $k$-means training, so instead we compare with alternating optimization. Fig.~\ref{f:coil} shows the learning curves.

The COIL--20 image dataset \citep{Nene_96a}, commonly used as a benchmark to test dimensionality reduction algorithms, contains rotation sequences of 20 different objects every 5 degrees (i.e., 72 images per object), each a grayscale image with pixel intensity in $[0,1]$. We resize the images to $32\times 32$. Thus, the data contain 20 closed, nonlinear 1D manifolds in a $1\,024$--dimensional space. We pick half of the images from objects 1 (duck) and 4 (cat) as validation set, which leaves a training set containing $N=1\,368$ images.

The autoencoder architecture is as follows. The bottleneck layer of low-dimensional codes has only 2 units, so that we can visualize the data. The encoder reduces the dimensionality of the input image to 2D, while the decoder reconstructs the image as best as possible from the 2D representation. Both the encoder and the decoder are radial basis function (RBF) networks, each having a single hidden layer. The first one (encoder) has the form $\z = \f_2(\f_1(\x;\W_1);\W_2) = \W_2 \, \f_1(\x;\W_1)$, where the vector $\f_1(\x;\W_1)$ has $M_1 = 1\,368$ elements (basis functions) of the form $\exp{(\smash{-\norm{(\x-\w_{1i})/\sigma_1}}^2)}$, $i=1,\dots,M_1$, with $\sigma_1 = 4$, and maps an image \x\ to a 2D space \z. The second one (decoder) has the form $\x' = \f_4(\f_3(\z;\W_3);\W_4) = \W_4 \, \f_3(\z;\W_3)$, where the vector $\f_3(\z;\W_3)$ has $M_3 = 1\,368$ elements (basis functions) of the form $\exp{(\smash{-\norm{(\x-\w_{3i})/\sigma_3}}^2)}$, $i=1,\dots,M_3$, with $\sigma_3 = 0.5$, and maps a 2D point \z\ to a $1\,024$D image. Thus, the complete autoencoder is the concatenation of the two Gaussian RBF networks, it has $K=3$ hidden layers with sizes 1\,024--1\,368--2--1\,368--1\,024, and a total of almost 3 million weights. As is usual with RBF networks, we applied a quadratic regularization to the linear-layer weights with a small value ($\lambda_2 = \lambda_4 = 10^{-3}$). The nested problem is then to minimize the following objective function, which is a least-squares error plus a quadratic regularization on the linear-layer weights:
\begin{equation}
  \label{e:nested-rbf}
  \hspace{-1.5ex}E_1(\W) = \frac{1}{2} \sum^N_{n=1}{\norm{\y_n - \f(\x_n;\W)}^2} + \lambda_2 \norm{\W_2}^2_F + \lambda_4 \norm{\W_4}^2_F \quad \f(\x;\W) = \f_4(\f_3(\f_2(\f_1(\x;\W_1);\W_2);\W_3);\W_4).\hspace{-1ex}
\end{equation}
In practice, RBF networks are trained in two stages \citep{Bishop06a}. Consider the encoder, for example. First, one trains the centers $\W_1$ using a clustering algorithm applied to the inputs $\{\x_n\}^N_{n=1}$, typically $k$-means or (when the number of centers is large) simply by fixing them to be a random subset of the inputs. Second, having determined $\W_1$, one obtains $\W_2$ from a linear least-squares problem, by solving a linear system. The reason why this is preferred to a fully nonlinear optimization over centers $\W_1$ and weights $\W_2$ is that it achieves near-optimal nets with a simple, noniterative procedure. This type of two-stage noniterative strategy to obtain nonlinear networks is widely applied beyond RBF networks, for example with support vector machines \citep{SchoelSmola01a}, kernel PCA \citep{Schoel_98a}, slice inverse regression \citep{Li91a} and others.

We wish to capitalize on this attractive property to train deep autoencoders constructed by concatenating RBF networks. However, backprop-based algorithms are incompatible with this two-stage training procedure, since it does not use derivatives to optimize over the centers. This leads us to the two following optimization methods: an alternating optimization approach, and MAC.

We can use an alternating optimization approach where we alternate the following two steps: (1) A step where we fix $(\W_1,\W_2)$ and train $(\W_3,\W_4)$, by applying $k$-means to $\W_3$ and a linear system to $\W_4$. This step is identical to the \W-step in MAC over $(\W_1,\W_2)$. (2) A step where we fix $(\W_3,\W_4)$ and train $(\W_1,\W_2)$, by applying $k$-means to $\W_1$ and a nonlinear optimization to $\W_2$ (we use nonlinear conjugate gradients with 10 steps). This is because $\W_2$ no longer appears linearly in the objective function, but is nonlinearly embedded as the argument of the decoder. This step is significantly slower than the \W-step in MAC over $(\W_3,\W_4)$.

We define the MAC-constrained problem as follows. We introduce auxiliary coordinates only at the coding layer (rather than at all $K=3$ hidden layers). This allows the \W-step to become the desired $k$-means plus linear system training for the encoder and decoder separately. It requires no programming effort; we simply call an existing, $k$-means-based RBF training algorithm for each of the encoder and decoder separately. We start with a quadratic penalty parameter $\mu = 1$ and increase it to $\mu = 5$ after 70 iterations.

Since we use as many centers as data points ($M_1 = M_3 = N$), the $k$-means step simplifies (for both methods) to setting each basis function center to an input point.

In this experiment, instead of using random initial weights, we obtained initial values for the \Z\ coordinates by running a nonlinear dimensionality reduction method, the elastic embedding (EE) \citep{Carreir10a}; this gives significantly better embeddings than spectral methods; \citep{Tenenb_00a,RoweisSaul00a}. This takes as input a matrix of $N \times N$ similarity values between every pair of COIL images $\x_1,\dots,\x_N$, and produces a nonlinear projection $\z_n$ in 2D for each $\x_n$. We used Gaussian similarities with a kernel width of $10$ and run EE for 200 iterations using $\lambda=100$ as its user parameter. All the optimization algorithms were initialized from these projections.

Fig.~\ref{f:coil}(left) shows the nested function error~\eqref{e:nested-rbf} (normalized per data point, i.e., divided by $N$). As before, MAC/QP achieves a large error decrease in a few iterations. Alternating optimization is much slower. Again, a parallel implementation of MAC/QP achieves a large speedup, which is nearly linear on the number of processors (fig.~\ref{f:parallel}). Fig.~\ref{f:coil}(right) shows some COIL images and their reconstruction at the end of training for each algorithm. Fig.~\ref{f:coil_embeddings} shows the initial projections \Z\ (from the elastic embedding algorithm) and the final projections \Z\ (after running MAC/QP). Most of the manifolds have improved, for example opening up loops that were folded.

\subsection{Learning the architecture: RBF autoencoder}
\label{s:expts:coil-model-selection}

\begin{figure}[t]
  \begin{center}
    \psfrag{mu1}[][]{{\tiny $\mu=1$}}
    \psfrag{mu5}[][]{{\tiny $\mu=5$}}
    \psfrag{arch1}[][]{{\tiny $(1\,368,1\,368)$}}
    \psfrag{arch2}[][]{{\tiny $(700,150)$}}
    \psfrag{arch3}[][]{{\tiny $(1\,050,150)$}}
    \psfrag{arch4}[][]{{\tiny $(1\,368,150)$}}
    \psfrag{obj}[][]{objective function}
    \psfrag{its}[][]{iteration}
    \includegraphics[width=0.48\linewidth]{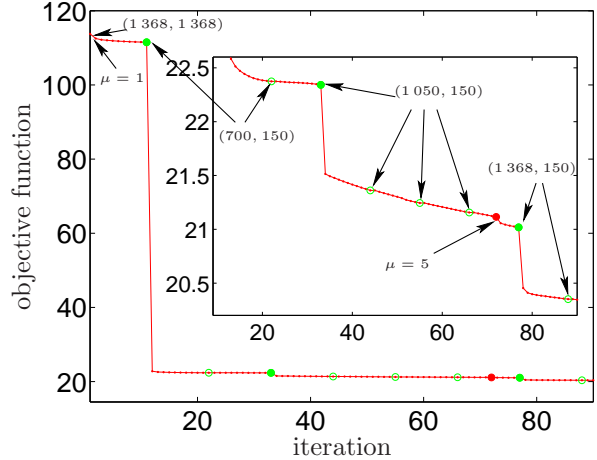}
    \caption{Learning the architecture of the RBF autoencoder for the dataset of fig.~\ref{f:coil} using MAC. We show the total error $E_1(\W) + C(\W)$ (the nested function error~\eqref{e:nested} plus the model cost) per point. Model selection steps are run every 10 iterations and are indicated with green markers (solid if the architecture changes and empty if it does not change). Other details as in fig.~\ref{f:coil}.}
    \label{f:coil-model-selection}
  \end{center}
\end{figure}

\begin{figure}[t!]
  \begin{center}
    \psfrag{labs}[t][]{number of processors}
    \psfrag{speedup}[][t]{speedup}
    \psfrag{usps}{{\scriptsize USPS data}}
    \psfrag{coil}{{\scriptsize COIL data}}
    \psfrag{coil architecture}{{\scriptsize COIL (learn arch.)}}
    \begin{tabular}[c]{@{}c@{}}
      \includegraphics[width=0.4\linewidth]{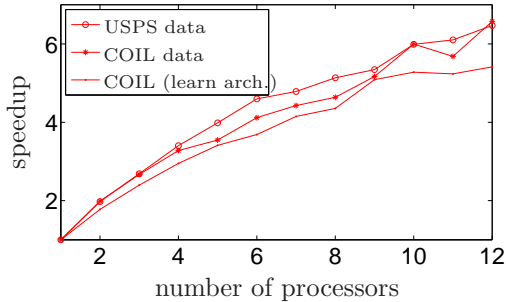}
    \end{tabular}
    \caption{Parallel processing speedup of MAC/QP as a function of the number of processors for each of the experiments of figures~\ref{f:usps}, \ref{f:coil} and~\ref{f:coil-model-selection}. The speedup in all our experiments was approximately linear, reaching about $6\times$ with $12$ processors.}
    \label{f:parallel}
  \end{center}
\end{figure}

We repeat the experiment of the RBF autoencoder of the previous section, but now we learn its architecture. We jointly learn the architecture of the encoder and of the decoder by trying $50$ different values for the number of basis functions in each (a search space of $50^2 = 2\,500$ architectures). We define the following objective function over architectures and their weights:
\begin{equation}
  \label{e:nested-rbf-model-selection}
  \bar{E}(\W) = E_1(\W) + C(\W)
\end{equation}
where $E_1(\W)$ is the nested error from eq.~\eqref{e:nested-rbf} (including regularization terms), and $C(\W) = C(\W_1) + \dots + C(\W_4)$ is the model selection term. We use the well-known AIC criterion \citep{Hastie_09a}. This is defined as
\begin{equation}
  \label{e:aic}
  \bar{E}(\bTheta) = \text{SSE}(\bTheta) + C(\bTheta) \qquad C(\bTheta) = 2 \epsilon^2 \abs{\bTheta}
\end{equation}
(times a constant $\frac{1}{N}$, which we omit) where SSE$(\bTheta)$ is the sum of squared errors achieved in the training set with a model having parameters $\bTheta$, $\epsilon^2$ is the mean squared error achieved by a low-bias model (typically estimated by training a model with many parameters) and $\abs{\bTheta}$ is the number of free parameters in the model. In our case, this means we use $C(\W)$ defined as
\begin{equation}
  \label{e:aic-rbf}
  C(\W) = 2 \epsilon^2 \abs{\W} = 2 \epsilon^2 ( D M_1 + L M_1 + L M_3 + D M_3) = 2 \epsilon^2 (D+L)(M_1+M_3)
\end{equation}
where $M_1$ and $M_3$ are the numbers of centers for the encoder and decoder, respectively (first and third hidden layers), and $D = 1\,024$ and $L = 2$ are the input and output dimension of the encoder, respectively (equivalently, the output and input dimension of the decoder). The total number of free parameters (centers and linear weights) in the autoencoder is thus $\abs{\W} = (D+L)(M_1+M_3)$.

We choose each of the numbers of centers $M_1$ and $M_3$ from a discrete set consisting of the $50$ equispaced values in the range $150$ to $1\,368$ (a total of $50^2 = 2\,500$ different architectures). We estimated $\epsilon^2 = 0.05$ from the result of the RBF autoencoder of section~\ref{s:expts:coil}, which had a large number of parameters and thus a low bias. As in that section, the centers of each network are constrained to be equal to a subset of the input points (chosen at random). We set $\sigma_1 = 4$, $\sigma_3 = 2.5$ and $\lambda_1 = \lambda_2 = 10^{-3}$. We start the MAC/QP optimization from the most complex model, having $M_1 = M_3 = 1\,368$ centers (i.e., the model of the previous section). While every iteration optimizes the MAC/QP objective function~\eqref{e:mac-quadpen} over $(\W,\Z)$, we run a model selection step only every 10 iterations. This selects separately for each net the best $M_k$ value and potentially changes the size of \W. Thus, every 11th iteration is a model selection step, which may or may not change the architecture.

Figure~\ref{f:coil-model-selection} shows the total error $\bar{E}(\W) = E_1(\W) + C(\W)$ of eq.~\eqref{e:nested-rbf-model-selection} (the nested function error plus the model cost). Model selection steps are indicated with green markers (solid if the architecture did change and empty if it did not change), annotated with the resulting value of $(M_1,M_3)$. Other details are as in fig.~\ref{f:coil}. The first change of architecture moves to a far smaller model $(M_1,M_3) = (700,150)$, achieving an enormous decrease in objective. This is explained by the strong penalty that AIC imposes on the number of parameters, favoring simpler models. Then, this is followed by a few minor changes of architecture interleaved with a continuous optimization of its weights. The final architecture has $(M_1,M_3) = (1\,368,150)$, for a total of $1.5$ million weights. While this architecture incurs a larger training error than that of the previous section, it uses a much simpler model and has a lower value for the overall objective function of eq.~\eqref{e:nested-rbf-model-selection}. Because, early during the optimization, MAC/QP settles on an architecture that is quite smaller than the one used in fig.~\ref{f:coil}, the result is in fact achieved in even less time. And, again, the parallel implementation is trivial and achieves an approximately linear speedup on the number of processors (fig.~\ref{f:parallel}).

\section{Conclusion}

MAC drastically facilitates, in runtime and human effort, the practical design and estimation of nonconvex, nested problems by jointly optimizing over all parameters, reusing existing algorithms, searching automatically over architectures and affording massively parallel computation, while provably converging to a solution of the nested problem. It could replace or complement backpropagation-based algorithms in learning nested systems both in the serial and parallel settings. It is particularly timely given that serial computation is reaching a plateau and cloud computing is becoming a commodity, and intelligent data processing is finding its way into mainstream devices (phones, cameras, etc.), thanks to increases in computational power and data availability. An important area of application may be the joint, automatic tuning of all stages of a complex, intelligent-processing system in data-rich disciplines, such as those found in computer vision and speech, in a distributed cloud computing environment. MAC also opens many questions, such as the optimal way to introduce auxiliary coordinates in a given problem, and the choice of specific algorithms to optimize the \W- and \Z-steps.

\section{Acknowledgments}

Work funded in part by NSF CAREER award IIS--0754089.

\appendix

\section{Theorems and proofs}
\label{s:proofs}

\subsection{Definitions}
\label{s:proofs:def}

Consider a regression problem of mapping inputs \x\ to outputs \y\ (both high-dimensional) with a deep net $\f(\x)$ given a dataset of $N$ pairs $(\x_n,\y_n)$. We define the \emph{nested objective function} to learn a deep net with $K$ hidden layers, like that in fig.~\ref{f:deepnet}, as (to simplify notation, we ignore bias parameters and assume each hidden layer has $H$ units):
\begin{equation}
  E_1(\W) = \frac{1}{2} \sum^N_{n=1}{\norm{\y_n - \f(\x_n;\W)}^2} \qquad \f(\x;\W) = \f_{K+1}(\dots \f_2(\f_1(\x;\W_1);\W_2)\dots;\W_{K+1}) \tag{\ref{e:nested}}
\end{equation}
where each layer function has the form $\f_k(\x;\W_k) = \sigma(\W_k\x)$, i.e., a linear mapping followed by a squashing nonlinearity ($\sigma(t)$ applies a scalar function, such as the sigmoid $1/(1+e^{-t})$, elementwise to a vector argument, with output in $[0,1]$).

In the \emph{method of auxiliary coordinates (MAC)}, we introduce one auxiliary variable per data point and per hidden unit (so $\Z = (\Z_1,\dots,\Z_N)$, with $\z_n = (\z_{1,n},\dots,\z_{K,n})$) and define the following equality-constrained optimization problem:
\begin{equation}
  E(\W,\Z) = \frac{1}{2} \sum^N_{n=1}{\norm{\y_n - \f_{K+1}(\z_{K,n};\W_{K+1})}^2} \text{ s.t.\ }
  \renewcommand{\arraystretch}{0.5}
  \left\{
  \begin{array}{@{}l@{}}
    \z_{K,n} = \f_K(\z_{K-1,n};\W_K) \\ \dots \\ \z_{1,n} = \f_1(\x_n;\W_1)
  \end{array}
  \right\} n=1,\dots,N.\tag{\ref{e:mac}}
  \hspace{-3ex}
\end{equation}
Sometimes, for notational convenience (in particular in theorem~\ref{th:MACQP}), we will write the constraints for the $n$th point as a single vector constraint $\z_n - \F(\z_n,\W;\x_n) = \0$ (with an obvious definition for \F). We will also call $\Omega$ the feasible set of the MAC-constrained problem, i.e.,
\begin{equation}
  \label{e:feasible}
  \Omega = \{(\W,\Z)\mathpunct{:}\ \z_n = \F(\z_n,\W;\x_n),\ n=1,\dots,N\}.
\end{equation}

To solve the constrained problem~\eqref{e:mac} using the quadratic-penalty (QP) method \citep{NocedalWright06a}, we optimize the following function over $(\W,\Z)$ for fixed $\mu>0$ and drive $\mu \rightarrow \infty$:
\begin{equation}
  E_Q(\W,\Z;\mu) = \frac{1}{2} \sum^N_{n=1}{\norm{\y_n - \f_{K+1}(\z_{K,n};\W_{K+1})}^2} + \frac{\mu}{2} \sum^N_{n=1}{\sum^K_{k=1}{\norm{\z_{k,n} - \f_k(\z_{k-1,n};\W_k)}^2}}.\tag{\ref{e:mac-quadpen}}
\end{equation}

\subsection{Equivalence of the MAC and nested formulations}
\label{s:proofs:equiv}

First, we give a theorem that holds under very general assumptions. In particular, it does not require the functions to be smooth, it holds for any loss function beyond the least-squares one, and it holds if the nested problem is itself subject to constraints. 
\begin{thm}
  \label{th:MAC-nested-equiv}
  The nested problem~\eqref{e:nested} and the MAC-constrained problem~\eqref{e:mac} are equivalent in the sense that their minimizers are in a one-to-one correspondence.
\end{thm}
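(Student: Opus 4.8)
The plan is to exploit the fact that, on the feasible set, the auxiliary coordinates are completely and uniquely determined by the weights, so that the MAC problem is really just the nested problem in disguise. Concretely, I would first observe that the equality constraints in~\eqref{e:mac} form a triangular (forward) recursion: given $\W$ and the input $\x_n$, the constraint $\z_{1,n} = \f_1(\x_n;\W_1)$ fixes $\z_{1,n}$, then $\z_{2,n} = \f_2(\z_{1,n};\W_2)$ fixes $\z_{2,n}$, and so on up to $\z_{K,n}$. Thus there is a well-defined map $\Z = \Z(\W)$ assigning to each $\W$ the unique auxiliary coordinates making $(\W,\Z)$ feasible, and the map $\phi\mathpunct{:}\ \W \mapsto (\W,\Z(\W))$ is a bijection from weight space onto the feasible set $\Omega$ of~\eqref{e:feasible}, its inverse being the projection $(\W,\Z)\mapsto\W$ (which is clearly continuous, so $\phi$ is in fact a homeomorphism onto $\Omega$).

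Next I would substitute these constraint-determined coordinates into the MAC objective. By construction $\z_{K,n}(\W) = \f_K(\dots\f_1(\x_n;\W_1)\dots;\W_K)$, so
\begin{equation*}
  E(\phi(\W)) = \frac{1}{2}\sum^N_{n=1}\norm{\y_n - \f_{K+1}(\z_{K,n}(\W);\W_{K+1})}^2 = \frac{1}{2}\sum^N_{n=1}\norm{\y_n - \f(\x_n;\W)}^2 = E_1(\W),
\end{equation*}
i.e.\ the MAC objective restricted to $\Omega$ coincides, through $\phi$, with the nested objective. This telescoping uses only the algebraic composition of the layer maps, not any differentiability, which is exactly what lets the statement hold for nonsmooth layers and arbitrary loss functions.

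Finally I would translate the correspondence $\phi$ into a correspondence of minimizers. Minimizing~\eqref{e:mac} means minimizing $E$ over $\Omega$; since $\phi$ is a bijection onto $\Omega$ and $E\circ\phi = E_1$, the global minimizers of $E$ over $\Omega$ are exactly the points $\phi(\W^*)$ with $\W^*$ a global minimizer of $E_1$, giving the one-to-one correspondence $\W^* \leftrightarrow (\W^*,\Z(\W^*))$. Because $\phi$ is a homeomorphism, the same argument applied on neighborhoods extends the correspondence to local minimizers as well. The argument is essentially bookkeeping once the bijection is set up; there is no hard analytic content, and the only points requiring care are making precise that the constraints are genuinely solvable for $\Z$ in closed form (the triangular structure) and---if one wants the local-minimizer version and allows extra constraints on the nested problem, as the theorem advertises---checking that such constraints, once written in the $(\W,\Z)$ variables, pull back cleanly through $\phi$ to the corresponding constraints on $\W$, so that feasible neighborhoods on the two sides match up.
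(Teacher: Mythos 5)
Your proposal is correct and is essentially the paper's own argument: your map $\phi(\W) = (\W,\Z(\W))$ is exactly the paper's forward-propagation function $\g$, the identity $E\circ\phi = E_1$ is the same telescoping, and the transfer of minimizers through the correspondence is the same bookkeeping. Your explicit observation that $\phi$ is a homeomorphism onto the feasible set $\Omega$ is a slightly cleaner way of handling the matching of local neighborhoods, which the paper does by hand (and with some imprecision in calling the graph set $\calM$ a ``neighborhood'' in $(\W,\Z)$-space).
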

\begin{proof}
  Let us prove that any minimizer of the nested problem is associated with a unique minimizer of the MAC-constrained problem $(\Rightarrow)$, and vice versa $(\Leftarrow)$. Recall the following definitions~\citep{NocedalWright06a}: (i) For an unconstrained minimization problem $\min_{\x\in\bbR^n}{F(\x)}$, $\x^*\in\bbR^n$ is a local minimizer if there exists a nonempty neighborhood $\calN\subset\bbR^n$ of $\x^*$ such that $F(\x^*) \le F(\x)\ \forall \x \in \calN$. (ii) For a constrained minimization problem $\min{F(\x)}$ s.t.\ $\x\in\Omega\subset\bbR^n$, $\x^*\in\bbR^n$ is a local minimizer if $\x^*\in\Omega$ and there exists a nonempty neighborhood $\calN\subset\bbR^n$ of $\x^*$ such that $F(\x^*) \le F(\x)\ \forall \x \in \calN \cap \Omega$.

  Define the ``forward-propagation'' function $\g(\W)$ as the result of mapping $\z_{1,n} = \f_1(\x_n;\W_1),\dots,\z_{K,n} = \f_K(\z_{K-1,n};\W_K)$ for $n=1,\dots,N$. This maps each \W\ to a unique \Z, and satisfies $\f_{K+1}(\z_{K,n};\W_{K+1}) = \f_{K+1}(\dots \f_2(\f_1(\x_n;\W_1);\W_2)\dots;\W_{K+1}) = \f(\x_n;\W)$ for $n=1,\dots,N$, and therefore that $E_1(\W) = E(\W,\g(\W))$ for any \W.

  $(\Rightarrow)$ Let $\W^*$ be a local minimizer of the nested problem~\eqref{e:nested}. Then, there exists a nonempty neighborhood \calN\ of $\W^*$ such that $E_1(\W^*) \le E_1(\W)\ \forall \W \in \calN$. Let $\Z^* = \g(\W^*)$ and call $\calM = \{(\W,\Z)\mathpunct{:}\ \W\in\calN\ \text{and}\ \Z=\g(\W)\}$, which is a nonempty neighborhood of $(\W^*,\Z^*)$ in $(\W,\Z)$-space. Now, for any $(\W,\Z) \in \calM \cap \calN$ we have that $E(\W,\Z) = E(\W,\g(\W)) = E_1(\W) \ge E_1(\W^*) = E(\W^*,\g(\W^*)) = E(\W^*,\Z^*)$. Hence $(\W^*,\Z^*)$ is a local minimizer of the MAC-constrained problem. 

  $(\Leftarrow)$ Let $(\W^*,\Z^*)$ be a local minimizer of the MAC-constrained problem~\eqref{e:mac}. Then, there exists a nonempty neighborhood \calM\ of $(\W^*,\Z^*)$ such that $E(\W^*,\Z^*) \le E(\W,\Z)\ \forall (\W,\Z) \in \calM \cap \Omega$. Note that $(\W,\Z) \in \calM \cap \Omega \Rightarrow \Z = \g(\W) \Rightarrow E(\W,\Z) = E_1(\W)$, and this applies in particular to $(\W^*,\Z^*)$ (which, being a solution, is feasible and thus belongs to $\calM \cap \Omega$). Calling $\calN = \{\W\mathpunct{:}\ (\W,\Z) \in \calM \cap \Omega\}$, we have that $\forall\W\in\calN \mathpunct{:}\ E_1(\W) = E(\W,\g(\W)) = E(\W,\Z) \ge E(\W^*,\Z^*) = E(\W^*,\g(\W^*)) = E_1(\W^*)$. Hence $\W^*$ is a local minimizer of the nested problem.

  Finally, one can see that the proof holds if the nested problem uses a loss function that is not the least-squares one, and if the nested problem is itself subject to constraints.
\end{proof}
Obviously, the theorem holds if we exchange $\ge$ with $>$ everywhere (thus exchanging non-strict with strict minimizers), and if we exchange ``min'' with ``max'' (hence the maximizers of the MAC and nested formulations are in a one-to-one correspondence as well). Figure~\ref{f:MAC-nested-equiv} illustrates the theorem. Essentially, the nested objective function $E_1(\W)$ stretches along the manifold defined by $(\W,\Z = \g(\W))$ preserving the minimizers and maximizers. The projection on \W-space of the part of $E(\W,\Z)$ that sits on top of that manifold recovers $E_1(\W)$.

\begin{figure}[t]
  \begin{center}
    \psfrag{z}[b][t]{\Z}
    \psfrag{w1}[b][t]{\W}
    \psfrag{E}[][t]{$E(\W,\Z)$}
    \psfrag{Ef}{$E(\W,\g(\W))$}
    \psfrag{E1}{\textcolor{blue}{$E_1(\W)$}}
    \psfrag{g}[b][b]{\textcolor{red}{$(\W,\g(\W))$}}
    \psfrag{minE1}{$\W^*$}
    \psfrag{minE}{$(\W^*,\Z^*)$}
    \includegraphics[width=0.4\linewidth]{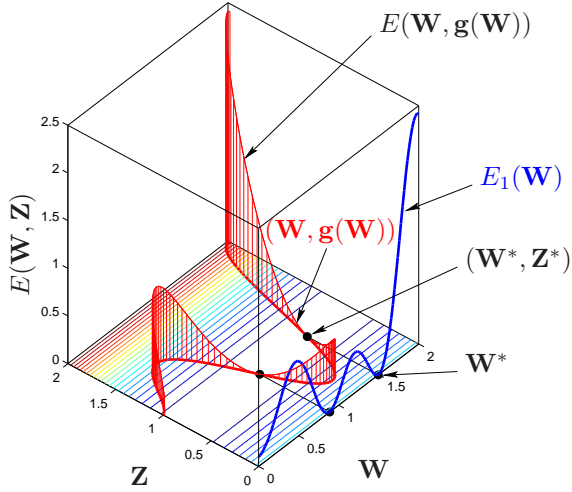}
    \caption{Illustration of the equivalence between the nested and MAC-constrained problems (see the proof of theorem~\ref{th:MAC-nested-equiv}). The MAC objective function $E(\W,\Z)$ is shown with contour lines in the $(\W,\Z)$-space, and with the vertical red lines on the feasible set $(\W,\g(\W))$. The nested objective function $E_1(\W)$ is shown in blue. Corresponding minima for both problems, $\W^*$ and $(\W^*,\Z^*)$, are indicated.}
    \label{f:MAC-nested-equiv}
  \end{center}
\end{figure}

\subsection{KKT conditions}

We now show that the first-order necessary (Karush-Kuhn-Tucker, KKT) conditions of both problems (nested and MAC-constrained) have the same stationary points. For simplicity and clarity of exposition, we give a proof for the special case of $K=1$. The proof for $K>1$ layers follows analogously. We assume the functions $\f_1$ and $\f_2$ have continuous first derivatives w.r.t.\ both its input and its weights. $\J_{\f_2}(\cdot;\W_2)$ indicates the Jacobian of $\f_2$ w.r.t.\ its input. To simplify notation, we sometimes omit the dependence on the weights; for example, we write $\f_2(\f_1(\x;\W_1);\W_2)$ as $\f_2(\f_1(\x))$, and $\J_{\f_2}(\cdot;\W_2)$ as $\J_{\f_2}(\cdot)$.
\begin{thm}
  \label{th:MAC-nested-equiv-KKT}
  The KKT conditions for the nested problem~\eqref{e:nested} and the MAC-constrained problem~\eqref{e:mac} are equivalent.
\end{thm}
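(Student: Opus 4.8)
The plan is to write out the first-order (KKT) stationarity conditions for each problem explicitly and then exhibit a bijection between their stationary points that identifies the MAC Lagrange multipliers with the backpropagated residuals of the nested problem. Throughout, write $\rr_n = \y_n - \f_2(\z_n;\W_2)$ for the residual at point $n$ (with the understanding that $\z_n = \f_1(\x_n;\W_1)$), keep the paper's convention that $\J_{\f_2}$ denotes the Jacobian of $\f_2$ with respect to its \emph{input}, and write $\partial\f_k/\partial\W_k$ for the Jacobian with respect to the \emph{weights}. For the unconstrained nested problem, differentiating $E_1$ and using the chain rule yields exactly two stationarity equations: over $\W_2$, $\sum_n (\partial\f_2/\partial\W_2)^\top \rr_n = \0$; and over $\W_1$, $\sum_n (\partial\f_1/\partial\W_1)^\top \J_{\f_2}(\z_n)^\top \rr_n = \0$, the latter being the usual backpropagated gradient.

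Next I would form the Lagrangian of the MAC problem with one multiplier vector $\blambda_n$ per constraint, $L = \tfrac{1}{2}\sum_n \norm{\y_n - \f_2(\z_n;\W_2)}^2 - \sum_n \blambda_n^\top(\z_n - \f_1(\x_n;\W_1))$, and set each partial gradient to zero. Stationarity in $\z_n$ gives $\blambda_n = -\J_{\f_2}(\z_n)^\top \rr_n$; stationarity in $\W_2$ gives $\sum_n (\partial\f_2/\partial\W_2)^\top \rr_n = \0$; stationarity in $\W_1$ gives $\sum_n (\partial\f_1/\partial\W_1)^\top \blambda_n = \0$; and primal feasibility gives $\z_n = \f_1(\x_n;\W_1)$. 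The correspondence is then immediate. Given a nested stationary $\W$, set $\z_n = \f_1(\x_n;\W_1)$ and $\blambda_n = -\J_{\f_2}(\z_n)^\top \rr_n$: feasibility and the $\z$-equation hold by construction, the $\W_2$-equations coincide verbatim, and substituting $\blambda_n$ into the MAC $\W_1$-equation reproduces the nested $\W_1$-equation. Conversely, from a MAC stationary point, feasibility forces $\Z$ to be the forward-propagated coordinates, the $\z$-equation pins down $\blambda_n$ uniquely, and eliminating $\blambda_n$ from the $\W_1$-equation recovers the nested condition. Hence the two KKT systems have identical solution sets in $\W$, and the multipliers are precisely the backpropagated error signals $\J_{\f_2}(\z_n)^\top \rr_n$.

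The step needing the most care is justifying that the KKT conditions are genuinely necessary for the MAC problem, i.e.\ verifying a constraint qualification. Here it is painless: each constraint is $\z_n - \f_1(\x_n;\W_1) = \0$, and differentiating in $\z_n$ contributes an identity block, so the full constraint Jacobian always has full row rank. Thus the linear independence constraint qualification (LICQ) holds everywhere and the multipliers $\blambda_n$ are \emph{uniquely} determined, which is exactly what makes the elimination of $\blambda_n$ reversible and the bijection well defined. The remaining work is only the bookkeeping of the chain rule and the transposes.

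Finally, I would remark that for general $K$ the same argument telescopes: introducing a multiplier $\blambda_{k,n}$ for each layer constraint, the $\z_{k,n}$-stationarity conditions give the backpropagation recursion $\blambda_{k,n} = \J_{\f_{k+1}}(\z_{k,n})^\top \blambda_{k+1,n}$ with terminal value $\blambda_{K,n} = -\J_{\f_{K+1}}(\z_{K,n})^\top \rr_n$. Eliminating the $\blambda_{k,n}$ layer by layer reconstitutes the nested gradient with respect to each $\W_k$, so the $K=1$ bijection extends unchanged and LICQ again holds automatically because each block of constraints contributes an identity in the corresponding $\z_{k,n}$.
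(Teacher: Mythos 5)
Your proposal is correct and follows essentially the same route as the paper's proof: write out both first-order systems, read off $\blambda_n = -\J_{\f_2}(\z_n)^\top(\y_n - \f_2(\z_n))$ from the $\z_n$-stationarity equation, and eliminate/reintroduce the multipliers together with $\z_n = \f_1(\x_n;\W_1)$ to pass between the two systems in both directions, with the general-$K$ case handled by the same telescoping recursion. Your explicit LICQ check (the identity block in $\partial C_n/\partial\z_n$ forcing full row rank of the constraint Jacobian) is a sound and worthwhile addition that the paper omits here and only supplies later, inside the proof of its quadratic-penalty convergence theorem.
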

\begin{proof}
  The nested problem for a nested function $\f_2(\f_1(\x))$ is:
  \begin{equation*}
    \min_{\W_1,\W_2}{ E_1(\W_1,\W_2) = \frac{1}{2} \sum^N_{n=1}{\norm{\y_n-\f_2(\f_1(\x_n;\W_1);\W_2)}^2} }.
  \end{equation*}
  Then we have the stationary point equation (first-order necessary conditions for a minimizer):
  \begin{align}
    \label{e:stat-pt1}
    \frac{\partial E_1}{\partial\W_1} &= -\sum^N_{n=1}{\frac{\partial\f_1^T}{\partial\W_1}(\x_n) \, \J_{\f_2}(\f_1(\x_n))^T (\y_n-\f_2(\f_1(\x_n)))} = \0 \\
    \label{e:stat-pt2}
    \frac{\partial E_1}{\partial\W_2} &= -\sum^N_{n=1}{\frac{\partial\f_2^T}{\partial\W_2}(\f_1(\x_n)) \, (\y_n-\f_2(\f_1(\x_n))) } = \0
  \end{align}
  which is satisfied by all the minima, maxima and saddle points.

  The MAC-constrained problem is
  \begin{equation*}
    \min_{\W_1,\W_2,\Z}{ E(\W_1,\W_2,\Z) = \frac{1}{2} \sum^N_{n=1}{\norm{\y_n-\f_2(\z_n;\W_2)}^2} } \text{ s.t.\ } \z_n = \f_1(\x_n;\W_1),\ n=1,\dots,N,
  \end{equation*}
  with Lagrangian
  \begin{equation*}
    \calL(\W_1,\W_2,\Z,\blambda) = \frac{1}{2} \sum^N_{n=1}{\norm{\y_n-\f_2(\z_n;\W_2)}^2} - \sum^N_{n=1}{\blambda^T_n (\z_n-\f_1(\x_n;\W_1))}
  \end{equation*}
  and KKT conditions
  \begin{align}
    \label{e:kkt1}
    \frac{\partial\calL_1}{\partial\W_1} &= \sum^N_{n=1}{ \frac{\partial\f_1^T}{\partial\W_1}(\x_n) \, \blambda_n } = \0 \\
    \label{e:kkt2}
    \frac{\partial\calL_1}{\partial\W_2} &= -\sum^N_{n=1}{ \frac{\partial\f_2^T}{\partial\W_2}(\f_1(\x_n)) \, (\y_n-\f_2(\z_n)) } = \0 \\
    \label{e:kkt3}
    \frac{\partial\calL_1}{\partial\z_n} &= -\J_{\f_2}(\z_n)^T (\y_n-\f_2(\z_n)) - \blambda_n = \0,\ n=1,\dots,N \\
    \label{e:kkt4}
    \z_n &= \f_1(\x_n;\W_1),\ n=1,\dots,N.
  \end{align}
  Substituting $\blambda_n$ from eq.~\eqref{e:kkt3} and $\z_n$ from eq.~\eqref{e:kkt4}:
  \begin{align}
    \label{e:kkt3b}
    \blambda_n &= -\J_{\f_2}(\z_n)^T (\y_n-\f_2(\z_n)),\ n=1,\dots,N \tag{\ref{e:kkt3}'} \\
    \label{e:kkt4b}
    \z_n &= \f_1(\x_n;\W_1),\ n=1,\dots,N \tag{\ref{e:kkt4}'}
  \end{align}
  into eqs.~\eqref{e:kkt1}--\eqref{e:kkt2} we recover eqs.~\eqref{e:stat-pt1}--\eqref{e:stat-pt2}, thus a KKT point of the constrained problem is a stationary point of the nested problem. Conversely, given a stationary point $(\W_1,\W_2)$ of the nested problem, and defining $\blambda_n$ and $\z_n$ as in eqs.~\eqref{e:kkt3b}--\eqref{e:kkt4b}, then $(\W_1,\W_2,\Z,\blambda)$ satisfies eqs.~\eqref{e:kkt1}--\eqref{e:kkt4} and so is a KKT point of the constrained problem. Hence, there is a one-to-one correspondence between the stationary points of the nested problem and the KKT points of the MAC-constrained problem.
\end{proof}
From theorem~\ref{th:MAC-nested-equiv} and~\ref{th:MAC-nested-equiv-KKT}, it follows that the minimizers, maximizers and saddle points of the nested problem are in one-to-one correspondence with the respective minimizers, maximizers and saddle points of the MAC-constrained problem. 

\section{Convergence of the quadratic-penalty method for MAC}
\label{s:proofs:QP}

Let us first give convergence conditions for the general equality-constrained minimization problem:
\begin{equation}
  \label{e:eqconstr}
  \min{ f(\x) } \text{ s.t.\ } c_i(\x) = 0,\ i=1,\dots,m
\end{equation}
and the quadratic-penalty (QP) function
\begin{equation}
  \label{e:eqconstrQ}
  Q(\x;\mu) = f(\x) + \frac{\mu}{2} \sum^m_{i=1}{c^2_i(\x)}
\end{equation}
with penalty parameter $\mu>0$. Given a positive increasing sequence $(\mu_k) \rightarrow \infty$, a nonnegative sequence $(\tau_k) \rightarrow 0$, and a starting point $\x_0$, the QP method finds an approximate minimizer $\x_k$ of $Q(\x;\mu_k)$ for $k=1,2,\dots$, so that the iterate $\x_k$ satisfies $\norm{\nabla_{\x}{Q(\x_k;\mu_k)}} \le \tau_k$. Given this algorithm, we have the following theorems:
\begin{thm}[\protect{\citealp[Th.~17.1]{NocedalWright06a}}]
  \label{th:17_1}
  Suppose that $(\mu_k) \rightarrow \infty$ and $(\tau_k) \rightarrow 0$. If each $\x_k$ is the exact global minimizer of $Q(\x;\mu_k)$, then every limit point $\x^*$ of the sequence $(\x_k)$ is a global solution of the problem~\eqref{e:eqconstr}.
\end{thm}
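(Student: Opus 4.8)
The plan is to compare each penalty minimizer against a fixed global solution of the constrained problem, exploiting the fact that feasibility makes the penalty term vanish. First I would fix a global solution $\bar\x$ of~\eqref{e:eqconstr} (assumed to exist), so that $c_i(\bar\x)=0$ for all $i$ and $f(\bar\x)\le f(\x)$ for every feasible $\x$. The entire argument rests on one inequality: since $\x_k$ is by hypothesis the \emph{exact global} minimizer of $Q(\cdot;\mu_k)$,
\begin{equation*}
  f(\x_k) + \frac{\mu_k}{2}\sum^m_{i=1}{c^2_i(\x_k)} = Q(\x_k;\mu_k) \le Q(\bar\x;\mu_k) = f(\bar\x),
\end{equation*}
where the final equality uses the feasibility of $\bar\x$ to annihilate its penalty contribution.

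From this single line I would extract two consequences and then pass to a subsequence $\x_{k_j}\to\x^*$ realizing the limit point. Dropping the nonnegative penalty term on the left gives $f(\x_k)\le f(\bar\x)$ for every $k$. Rearranging instead isolates the constraint violation, $\sum^m_{i=1}{c^2_i(\x_k)}\le \frac{2}{\mu_k}\,(f(\bar\x)-f(\x_k))$.

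Next I would take limits along the subsequence. Continuity of $f$ gives $f(\x_{k_j})\to f(\x^*)$, so $f(\bar\x)-f(\x_{k_j})$ stays bounded; since $\mu_{k_j}\to\infty$, the right-hand side of the violation bound tends to $0$, and continuity of the $c_i$ then forces $\sum_i c^2_i(\x^*)=0$, i.e.\ $\x^*$ is feasible. Passing to the limit in $f(\x_k)\le f(\bar\x)$ yields $f(\x^*)\le f(\bar\x)$; combined with the feasibility of $\x^*$ and the global optimality of $\bar\x$ over the feasible set (which gives $f(\bar\x)\le f(\x^*)$), this forces $f(\x^*)=f(\bar\x)$. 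Hence $\x^*$ is itself a global solution of~\eqref{e:eqconstr}.

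The main obstacle is the feasibility step: one must guarantee that the squared constraint violation genuinely vanishes in the limit rather than merely staying bounded. The delicate point is that the factor $1/\mu_k\to0$ only drives the violation down provided the numerator $f(\bar\x)-f(\x_k)$ does not blow up, which is exactly where continuity of $f$ and the convergence $\x_{k_j}\to\x^*$ (ensuring boundedness of $f(\x_{k_j})$) are indispensable. I would finally remark that the hypothesis $\tau_k\to0$ is never used here: exact minimization renders the stationarity tolerance irrelevant, and that condition enters only in the companion result for inexact minimizers.
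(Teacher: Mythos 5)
Your argument is correct and is precisely the standard proof of this result (it is Nocedal and Wright's own proof of their Theorem 17.1); the paper itself only quotes the theorem without reproducing a proof, so there is nothing to diverge from. The one implicit hypothesis you rely on --- continuity of $f$ and the $c_i$, needed both to pass to the limit in $f(\x_{k_j})\le f(\bar\x)$ and to conclude feasibility of $\x^*$ --- is assumed throughout the surrounding text, and your closing remark that $\tau_k\rightarrow 0$ plays no role in the exact-minimization case is also accurate.
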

\begin{thm}[\protect{\citealp[Th.~17.2]{NocedalWright06a}}]
  \label{th:17_2}
  Suppose that $(\mu_k) \rightarrow \infty$ and $(\tau_k) \rightarrow 0$, and that $\x^*$ is a limit point of $(\x_k)$. Then $\x^*$ is a stationary point of the function $\sum^m_{i=1}{c^2_i(\x)}$. Besides, if the constraint gradients $\nabla{c_i(\x^*)},\ i=1,\dots,m$ are linearly independent, then $\x^*$ is a KKT point for the problem~\eqref{e:eqconstr}. For such points, we have for any infinite subsequence \calK\ such that $\lim_{k\in\calK}{\x_k} = \x^*$ that $\lim_{k\in\calK}{-\mu_k c_i(\x_k)} = \lambda^*_i,\ i=1,\dots,m$, where $\blambda^*$ is the multiplier vector that satisfies the KKT conditions for the problem~\eqref{e:eqconstr}.
\end{thm}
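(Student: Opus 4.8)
The plan is to work entirely from the approximate-stationarity condition that defines the iterates, together with continuity of $f$, the $c_i$, and their gradients. Since $\nabla_{\x} Q(\x;\mu) = \nabla f(\x) + \mu \sum_{i=1}^m c_i(\x)\,\nabla c_i(\x)$, the defining bound $\norm{\nabla_{\x} Q(\x_k;\mu_k)} \le \tau_k$ reads
\[
  \Bigl\| \nabla f(\x_k) + \mu_k \sum_{i=1}^m c_i(\x_k)\,\nabla c_i(\x_k) \Bigr\| \le \tau_k .
\]
Both claims then follow by dividing this relation by the right quantity and passing to the limit along an infinite subsequence $\calK$ with $\x_k \to \x^*$. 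Throughout, I would fix such a $\calK$ and exploit that $\nabla f(\x_k) \to \nabla f(\x^*)$, $\nabla c_i(\x_k) \to \nabla c_i(\x^*)$, and $\nabla_{\x} Q(\x_k;\mu_k) \to \0$ (the latter because its norm is bounded by $\tau_k \to 0$).

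For the first claim I would divide the displayed inequality by $\mu_k$, obtaining
\[
  \Bigl\| \tfrac{1}{\mu_k}\nabla f(\x_k) + \sum_{i=1}^m c_i(\x_k)\,\nabla c_i(\x_k) \Bigr\| \le \tfrac{\tau_k}{\mu_k}.
\]
Because $\mu_k \to \infty$ kills the bounded term $\nabla f(\x_k)$ and drives $\tau_k/\mu_k \to 0$, taking limits along $\calK$ yields $\sum_i c_i(\x^*)\nabla c_i(\x^*) = \0$. This is precisely $\tfrac12 \nabla\bigl(\sum_i c_i^2\bigr)(\x^*) = \0$, so $\x^*$ is a stationary point of $\sum_i c_i^2$.

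For the KKT claim I would introduce the multiplier estimates $\lambda_i^k = -\mu_k c_i(\x_k)$, gather $\blambda_k = (\lambda_1^k,\dots,\lambda_m^k)$, and collect the constraint gradients as the rows of a Jacobian $\A(\x)$, so that the stationarity bound rewrites as $\A(\x_k)^T \blambda_k = \nabla f(\x_k) - \nabla_{\x} Q(\x_k;\mu_k)$ with residual of norm at most $\tau_k$. Under the linear-independence hypothesis, $\A(\x^*)$ has full row rank, so the Gram matrix $\A(\x^*)\A(\x^*)^T$ is invertible, and by continuity the same holds for $\A(\x_k)$ with $k\in\calK$ large. Left-multiplying by $\A(\x_k)$ and inverting the Gram matrix gives the closed form $\blambda_k = (\A(\x_k)\A(\x_k)^T)^{-1}\A(\x_k)\bigl(\nabla f(\x_k) - \nabla_{\x} Q(\x_k;\mu_k)\bigr)$, which converges to $\blambda^* = (\A(\x^*)\A(\x^*)^T)^{-1}\A(\x^*)\nabla f(\x^*)$. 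Passing to the limit in $\A(\x_k)^T\blambda_k = \dots$ then gives $\nabla f(\x^*) = \sum_i \lambda_i^*\,\nabla c_i(\x^*)$, while feasibility $c_i(\x^*)=0$ follows from $c_i(\x_k) = -\lambda_i^k/\mu_k \to 0$; together these are the KKT conditions, and $\lim_{k\in\calK}(-\mu_k c_i(\x_k)) = \lambda_i^*$ is immediate from the definition of $\lambda_i^k$.

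The main obstacle is the convergence of the multiplier estimates $\blambda_k$: the quantities $-\mu_k c_i(\x_k)$ are an indeterminate $\infty\cdot 0$ form, and only the constraint qualification makes them converge. The crucial maneuver is recognizing that left-multiplying the (possibly underdetermined) system $\A(\x_k)^T\blambda_k = \dots$ by $\A(\x_k)$ replaces it by one governed by the invertible Gram matrix $\A(\x_k)\A(\x_k)^T$, which pins down $\blambda_k$ uniquely and makes its limit computable; without the linear-independence assumption this inversion is unavailable and the multipliers need not converge, so that step is where the hypothesis earns its keep.
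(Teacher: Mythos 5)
The paper does not prove this theorem itself --- it is quoted verbatim as Theorem~17.2 of \citet{NocedalWright06a} --- and your argument is a correct reconstruction of precisely the standard proof given there: divide the approximate-stationarity bound by $\mu_k$ for the first claim, then define $\lambda^k_i = -\mu_k c_i(\x_k)$ and use the invertibility of the Gram matrix $\A(\x_k)\A(\x_k)^T$ (guaranteed near $\x^*$ by the linear-independence hypothesis) to solve for and pass to the limit in the multiplier estimates. The only cosmetic remark is that feasibility of $\x^*$ can also be read off directly from the first claim, since $\sum_i c_i(\x^*)\nabla c_i(\x^*) = \0$ together with linear independence of the $\nabla c_i(\x^*)$ forces $c_i(\x^*)=0$; your route via $c_i(\x_k) = -\lambda^k_i/\mu_k \rightarrow 0$ is equally valid once boundedness of $\lambda^k_i$ is in hand.
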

If now we particularize these general theorems to our case, we can obtain stronger theorems. Theorem~\ref{th:17_1} is generally not applicable, because optimization problems involving nested functions are typically not convex and have local minima. Theorem~\ref{th:17_2} is applicable to prove convergence in the nonconvex case. We assume the functions $\f_1,\dots,\f_{K+1}$ in eq.~\eqref{e:nested} have continuous first derivatives w.r.t.\ both its input and its weights, so $E(\W,\Z)$ is differentiable w.r.t.\ \W\ and \Z.
\begin{thm}[Convergence of MAC/QP for nested problems]
  \label{th:MACQP}
  Consider the constrained problem~\eqref{e:mac} and its quadratic-penalty function $E_Q(\W,\Z;\mu)$ of~\eqref{e:mac-quadpen}. Given a positive increasing sequence $(\mu_k) \rightarrow \infty$, a nonnegative sequence $(\tau_k) \rightarrow 0$, and a starting point $(\W^0,\Z^0)$, suppose the QP method finds an approximate minimizer $(\W^k,\Z^k)$ of $E_Q(\W^k,\Z^k;\mu_k)$ that satisfies $\norm{\nabla_{\W,\Z}{E_Q(\W^k,\Z^k;\mu_k)}} \le \tau_k$ for $k=1,2,\dots$ Then, $\lim_{k\rightarrow\infty}{(\W^k,\Z^k)} = (\W^*,\Z^*)$, which is a KKT point for the problem~\eqref{e:mac}, and its Lagrange multiplier vector has elements $\blambda^*_n = \lim_{k\rightarrow\infty}{-\mu_k \, (\Z^k_n - \F(\Z^k_n,\W^k;\x_n))},\ n=1,\dots,N$.
\end{thm}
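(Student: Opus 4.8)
The plan is to recognize the MAC/QP setup as a direct instance of the general equality-constrained framework~\eqref{e:eqconstr}--\eqref{e:eqconstrQ}, with $f(\W,\Z) = \frac{1}{2}\sum_{n=1}^N \norm{\y_n - \f_{K+1}(\z_{K,n};\W_{K+1})}^2$ playing the role of the objective and the components of $\z_n - \F(\z_n,\W;\x_n)$, over all $n$, playing the role of the constraints $c_i$. Since the $\f_k$ are assumed $C^1$, both $f$ and the constraints are continuously differentiable and $E_Q$ is exactly the $Q(\cdot;\mu)$ of~\eqref{e:eqconstrQ}. Theorem~\ref{th:17_1} is unavailable because nested problems are nonconvex and we cannot compute global minimizers of each subproblem, so the argument rests on Theorem~\ref{th:17_2}, which only needs the approximate-stationarity condition $\norm{\nabla_{\W,\Z} E_Q} \le \tau_k$ already in the hypotheses. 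Applying it gives, for any limit point $(\W^*,\Z^*)$ of the iterates, that it is a stationary point of the aggregate squared-constraint function.

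The crux is the extra hypothesis in Theorem~\ref{th:17_2} needed to promote such a stationary point to a genuine KKT point, namely linear independence of the constraint gradients at $(\W^*,\Z^*)$. I would establish this directly from the chain structure of the MAC constraints, and in fact show it holds at every point, so it is automatic. Fix a data point $n$ and consider the Jacobian of the map $\z_n \mapsto \z_n - \F(\z_n,\W;\x_n)$ with respect to the stacked vector $\z_n = (\z_{1,n},\dots,\z_{K,n})$. The $k$th block-row, coming from $\z_{k,n} - \f_k(\z_{k-1,n};\W_k)$, equals $\I$ in the $\z_{k,n}$ columns, equals $-\J_{\f_k}(\z_{k-1,n})$ in the $\z_{k-1,n}$ columns, and vanishes elsewhere. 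Hence this Jacobian is block lower-triangular with identity blocks on the diagonal, so it is nonsingular irrespective of \W\ and of the activation values. Because distinct data points involve disjoint blocks of \Z, the $\Z$-part of the full constraint Jacobian is block-diagonal across $n=1,\dots,N$ with each block nonsingular, hence has full row rank; appending the \W-columns cannot lower the rank. Therefore the constraint gradients are linearly independent everywhere, and in particular at $(\W^*,\Z^*)$.

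With linear independence in hand, Theorem~\ref{th:17_2} immediately yields that $(\W^*,\Z^*)$ is a KKT point of~\eqref{e:mac} and that $-\mu_k\,(\Z^k_n - \F(\Z^k_n,\W^k;\x_n))$ converges to the corresponding Lagrange multiplier $\blambda^*_n$ for each $n$, which is exactly the claimed formula; the whole-sequence statement $\lim_{k\to\infty}(\W^k,\Z^k) = (\W^*,\Z^*)$ is read as the convergence to this limit point supplied by the theorem. The only nontrivial ingredient is the linear-independence (LICQ) verification; everything else is bookkeeping to match our notation to that of~\eqref{e:eqconstr}--\eqref{e:eqconstrQ}. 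I therefore expect the triangular-Jacobian argument to be the main obstacle, though it is a clean structural fact about the unfolded constraints rather than a hard analytic estimate.
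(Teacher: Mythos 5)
Your proposal follows essentially the same route as the paper: invoke Theorem~\ref{th:17_2} and verify LICQ from the layered structure of the constraints --- your block-lower-triangular constraint Jacobian with identity diagonal blocks is exactly the matrix form of the paper's backward substitution showing that any vanishing linear combination of the gradients must have all coefficients zero. The only divergence is that the paper additionally (if somewhat loosely) argues that the limit of the iterates exists, citing lower-boundedness and smoothness of $E_Q$, whereas you reinterpret the whole-sequence claim as a statement about limit points; neither treatment fully closes that gap, so this does not count as a genuinely different approach.
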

\begin{proof}
  It follows by applying theorem~\ref{th:17_2} to the constrained problem~\eqref{e:mac} and by noting that $\lim_{k\rightarrow\infty}{(\W^k,\Z^k)} = (\W^*,\Z^*)$ exists and that the constraint gradients are linearly independent. We prove these two statements in turn.

  The limit of the sequence $((\W^k,\Z^k))$ exists because the objective function $E(\W,\Z)$ of the MAC-constrained problem (hence the QP function $E_Q(\W,\Z;\mu)$) are lower bounded and have continuous derivatives. 

  The constraint gradients are l.i.\ at any point $(\W,\Z)$ and thus, in particular, at the limit $(\W^*,\Z^*)$. To see this, let us first compute the constraint gradients. There is one constraint $C_{nkh}(\W,\Z) = z_{nkh} - f_{kh}(\z_{n,k-1};\W_k) = 0$ for each point $n=1,\dots,N$, layer $k=1,\dots,K$ and unit $h\in\calI(k)$, where we define $\calI(k)$ as the set of auxiliary coordinate indices for layer $k$ and $\z_{n0} = \x_n,\ n=1,\dots,N$. The gradient of this constraint is:
  \begin{align*}
    \frac{\partial C_{nkh}}{\partial\W_{k'}} &= - \delta_{kk'} \frac{\partial f_{kh}}{\partial\W_k},\ k=1,\dots,K \\
    \frac{\partial C_{nkh}}{\partial z_{n'k'h'}} &= \delta_{nn'} \left( \delta_{kk'} \delta_{hh'} - \delta_{k-1,k'} \frac{\partial f_{kh}}{\partial z_{n,k-1,h}} \right),\ n=1,\dots,N,\ k=1,\dots,K,\ h\in\calI(k).
  \end{align*}
  Now, we will show that these gradients are l.i.\ at any point $(\W,\Z)$. It suffices to look at the gradients w.r.t.\ \Z. Call $\alpha_{nkh} = \partial f_{kh}/\partial z_{n,k-1,h}$ for short. Constructing a linear combination of them and setting it to zero:
  \begin{equation*}
    \sum^N_{n=1}{\sum^K_{k=1}{\sum_{h\in\calI(k)}{ \lambda_{nkh} \frac{\partial C_{nkh}}{\partial\Z'} }}} = \0.
  \end{equation*}
  This implies, for the gradient element corresponding to $z_{n'k'h'}$:
  \begin{multline*}
    \sum^N_{n=1}{\sum^K_{k=1}{\sum_{h\in\calI(k)}{ \lambda_{nkh} \delta_{nn'} \left( \delta_{kk'} \delta_{hh'} - \delta_{k-1,k'} \alpha_{nkh} \right) }}} = \lambda_{n'k'h'} - \sum_{h\in\calI(k'+1)}{ \lambda_{n',k'+1,h} \alpha_{n',k'+1,h} } = 0 \\
    \Longrightarrow \lambda_{n'k'h'} = \sum_{h\in\calI(k'+1)}{ \lambda_{n',k'+1,h} \alpha_{n',k'+1,h} }.
  \end{multline*}
  Applying this for $k'=K,\dots,1$:
  \begin{itemize}
  \item For $k'=K$: $\lambda_{n'Kh'} = 0,\ n'=1,\dots,N,\ h'\in\calI(K)$.
  \item For $k'=K-1$: $\lambda_{n',K-1,h'} = \sum_{h\in\calI(K)}{ \lambda_{n',K,h} \alpha_{n',K,h} } = 0,\ n'=1,\dots,N,\ h'\in\calI(K-1)$.
  \item \dots
  \item For $k'=1$: $\lambda_{n',1,h'} = \sum_{h\in\calI(2)}{ \lambda_{n',2,h} \alpha_{n',2,h} } = 0,\ n'=1,\dots,N,\ h'\in\calI(1)$.
  \end{itemize}
  Hence, all the coefficients $\lambda_{nkh}$ are zero and the gradients are l.i.
\end{proof}
In practice, as with any continuous optimization problem, convergence may occur in pathological cases to a stationary point of the constrained problem rather than a minimizer.

In summary, MAC/QP defines a continuous path $(\W^*(\mu),\Z^*(\mu))$ which, under some mild assumptions (essentially, that we minimize $E_Q(\W,\Z;\mu)$ increasingly accurately as $\mu\rightarrow\infty$), converges to a stationary point (typically a minimizer) of the constrained problem~\eqref{e:mac}, and thus to a minimizer of the nested problem~\eqref{e:nested}.

We also have the following result (for simplicity, we give it for $K=1$ layer).
\begin{thm}
  \label{th:QP2}
  If a stationary point of the QP function for the problem of theorem~\ref{th:MAC-nested-equiv-KKT} satisfies $\z_n = \f_1(\x_n;\W_1)$, then it is also a stationary point of the nested problem for all $\mu \ge 0$.
\end{thm}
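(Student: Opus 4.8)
The plan is to write out the first-order stationarity conditions of the quadratic-penalty function for the $K=1$ case, impose the feasibility hypothesis $\z_n = \f_1(\x_n;\W_1)$, and verify that what survives is precisely the pair of nested stationarity equations~\eqref{e:stat-pt1}--\eqref{e:stat-pt2}. For $K=1$ the penalty function is $E_Q = \frac{1}{2}\sum_n \norm{\y_n - \f_2(\z_n;\W_2)}^2 + \frac{\mu}{2}\sum_n \norm{\z_n - \f_1(\x_n;\W_1)}^2$, and setting its gradients with respect to $\W_2$, $\z_n$ and $\W_1$ to zero gives three families of equations, which I would compute by the chain rule, reusing the Jacobian notation $\J_{\f_2}$ already introduced for theorem~\ref{th:MAC-nested-equiv-KKT}.

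The key observation is that the residual $\z_n - \f_1(\x_n;\W_1)$ appears as a factor in every term coming from the penalty, so at a feasible point the entire gradient of the penalty (with respect to $\W_1$, $\W_2$ and $\Z$ alike) vanishes identically, independently of $\mu$. Consequently: (i) the $\W_1$-stationarity equation of $E_Q$ is satisfied trivially, with both sides zero, and carries no information; (ii) the $\z_n$-stationarity equation collapses, for each $n$ separately, to the per-point orthogonality condition $\J_{\f_2}(\f_1(\x_n))^T (\y_n - \f_2(\f_1(\x_n))) = \0$; and (iii) the $\W_2$-stationarity equation, after substituting $\z_n = \f_1(\x_n)$, becomes literally~\eqref{e:stat-pt2}.

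It then remains to recover the nested $\W_1$-equation~\eqref{e:stat-pt1}. The point I would stress is that this does not come from the (vacuous) QP $\W_1$-equation but from the $\Z$-equation: since each summand of~\eqref{e:stat-pt1} contains the factor $\J_{\f_2}(\f_1(\x_n))^T (\y_n - \f_2(\f_1(\x_n)))$, and this factor is zero for every $n$ by (ii), the whole sum vanishes and~\eqref{e:stat-pt1} holds. Thus both nested stationarity equations hold at $(\W_1,\W_2)$, which is the claim; and because $\mu$ cancelled at the feasibility step, the conclusion is valid for every $\mu \ge 0$, including $\mu = 0$.

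I expect the only real subtlety, rather than an obstacle, to be this last transfer: noticing that the auxiliary-coordinate stationarity condition is actually \emph{stronger} (a per-point equation) than the nested lower-layer condition~\eqref{e:stat-pt1} (a single summed equation), so that feasibility together with $\Z$-stationarity automatically forces the nested gradient with respect to $\W_1$. The chain-rule bookkeeping for the three gradients is routine and I would not dwell on it.
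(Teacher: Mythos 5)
Your proposal is correct and follows essentially the same route as the paper's proof: write the three stationarity equations of $E_Q$, note that feasibility kills the penalty gradient and collapses the $\z_n$-equation to the per-point condition $\J_{\f_2}(\f_1(\x_n))^T(\y_n-\f_2(\f_1(\x_n)))=\0$, and use that to recover eqs.~\eqref{e:stat-pt1}--\eqref{e:stat-pt2}. Your explicit remark that the nested $\W_1$-equation comes from the $\Z$-stationarity (not the vacuous QP $\W_1$-equation) is exactly the step the paper performs, just stated more carefully.
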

\begin{proof}
  The QP function is:
  \begin{equation*}
    E_Q(\W_1,\W_2,\Z;\mu) = \frac{1}{2} \sum^N_{n=1}{\norm{\y_n-\f_2(\z_n;\W_2)}^2} + \frac{\mu}{2} \sum^N_{n=1}{\norm{\z_n-\f_1(\x_n;\W_1)}^2}.
  \end{equation*}
  A stationary point of $E_Q$ must satisfy the equations:
  \begin{align}
    \label{e:QPstat-pt1}
    \frac{\partial E_Q}{\partial\W_1} &= -\mu \sum^N_{n=1}{\frac{\partial\f_1^T}{\partial\W_1}(\x_n) \, (\z_n-\f_1(\x_n))} = \0 \\
    \label{e:QPstat-pt2}
    \frac{\partial E_Q}{\partial\W_2} &= -\sum^N_{n=1}{\frac{\partial\f_2^T}{\partial\W_2}(\z_n) \, (\y_n-\f_2(\z_n)) } = \0 \\
    \label{e:QPstat-pt3}
    \frac{\partial E_Q}{\partial\z_n} &= \J_{\f_2}(\z_n)^T (\y_n-\f_2(\z_n)) + \mu (\z_n-\f_1(\x_n)) = \0,\ n=1,\dots,N.
  \end{align}
  If $\z_n = \f_1(\x_n;\W_1)$ for $n=1,\dots,N$, then from eq.~\eqref{e:QPstat-pt3} $\J_{\f_2}(\z_n)^T (\y_n-\f_2(\z_n)) = \0$ for $n=1,\dots,N$ and $(\W_1,\W_2)$ satisfies eqs.~\eqref{e:stat-pt1}--\eqref{e:stat-pt2}, so it is a stationary point of the nested problem for all $\mu \ge 0$.
\end{proof}
Remarks:
\begin{itemize}
\item Since the QP minimizer approaches the constraints from their infeasible side, the assumption $\z_n = \f_1(\x_n;\W_1)$ for $n=1,\dots,N$ does not hold unless $E_1(\W_1,\W_2) = 0$ there.
\item The converse of theorem~\ref{th:QP2} is not generally true: if $(\W_1,\W_2)$ is a stationary point of the nested problem, then defining $\z_n = \f_1(\x_n;\W_1)$ for $n=1,\dots,N$ we have that eqs.~\eqref{e:QPstat-pt1}--\eqref{e:QPstat-pt2} hold but eq.~\eqref{e:QPstat-pt3} does not.
\item Theorem~\ref{th:QP2} does not imply that the function $E_Q(\W,\Z;\mu)$ (for $\mu > \bar{\mu}$) is an exact penalty function for the objective $E(\W,\Z)$, for this we need the opposite: that any local solution of the MAC-constrained problem is a local minimizer of $E_Q(\W,\Z;\mu)$.
\end{itemize}


\begin{thebibliography}{53}
\providecommand{\natexlab}[1]{#1}
\providecommand{\url}[1]{\texttt{#1}}
\expandafter\ifx\csname urlstyle\endcsname\relax
  \providecommand{\doi}[1]{doi: #1}\else
  \providecommand{\doi}{doi: \begingroup \urlstyle{rm}\Url}\fi

\bibitem[Bengio and {LeCun}(2007)]{BengioLecun07a}
Y.~Bengio and Y.~{LeCun}.
\newblock Scaling learning algorithms toward {AI}.
\newblock In L.~Bottou, O.~Chapelle, D.~{DeCoste}, and J.~Weston, editors,
  \emph{Large-Scale Kernel Machines}, Neural Information Processing Series,
  pages 321--360. MIT Press, 2007.

\bibitem[Bengio et~al.(2007)Bengio, Lamblin, Popovici, and
  Larochelle]{Bengio_07a}
Y.~Bengio, P.~Lamblin, D.~Popovici, and H.~Larochelle.
\newblock Greedy layer-wise training of deep networks.
\newblock In B.~Sch{\"o}lkopf, J.~Platt, and T.~Hofmann, editors,
  \emph{Advances in Neural Information Processing Systems (NIPS)}, volume~19,
  pages 153--160. MIT Press, Cambridge, MA, 2007.

\bibitem[Bishop(2006)]{Bishop06a}
C.~M. Bishop.
\newblock \emph{Pattern Recognition and Machine Learning}.
\newblock Springer Series in Information Science and Statistics.
  Springer-Verlag, Berlin, 2006.

\bibitem[Boyd et~al.(2011)Boyd, Parikh, Chu, Peleato, and Eckstein]{Boyd_11a}
S.~Boyd, N.~Parikh, E.~Chu, B.~Peleato, and J.~Eckstein.
\newblock Distributed optimization and statistical learning via the alternating
  direction method of multipliers.
\newblock \emph{Foundations and Trends in Machine Learning}, 3\penalty0
  (1):\penalty0 1--122, 2011.

\bibitem[Carreira-Perpi{\~n}{\'a}n(2010)]{Carreir10a}
M.~{\'A}. Carreira-Perpi{\~n}{\'a}n.
\newblock The elastic embedding algorithm for dimensionality reduction.
\newblock In J.~F{\"u}rnkranz and T.~Joachims, editors, \emph{Proc. of the 27th
  Int. Conf. Machine Learning (ICML 2010)}, pages 167--174, Haifa, Israel,
  June~21--25 2010.

\bibitem[Carreira-Perpi{\~n}{\'a}n and Lu(2008)]{CarreirLu08a}
M.~{\'A}. Carreira-Perpi{\~n}{\'a}n and Z.~Lu.
\newblock Dimensionality reduction by unsupervised regression.
\newblock In \emph{Proc. of the 2008 IEEE Computer Society Conf. Computer
  Vision and Pattern Recognition (CVPR'08)}, Anchorage, AK, June~23--28 2008.

\bibitem[Carreira-Perpi{\~n}{\'a}n and Lu(2010)]{CarreirLu10a}
M.~{\'A}. Carreira-Perpi{\~n}{\'a}n and Z.~Lu.
\newblock Parametric dimensionality reduction by unsupervised regression.
\newblock In \emph{Proc. of the 2010 IEEE Computer Society Conf. Computer
  Vision and Pattern Recognition (CVPR'10)}, pages 1895--1902, San Francisco,
  CA, June~13--18 2010.

\bibitem[Carreira-Perpi{\~n}{\'a}n and Lu(2011)]{CarreirLu11a}
M.~{\'A}. Carreira-Perpi{\~n}{\'a}n and Z.~Lu.
\newblock Manifold learning and missing data recovery through unsupervised
  regression.
\newblock In \emph{Proc. of the 12th IEEE Int. Conf. Data Mining (ICDM 2011)},
  pages 1014--1019, Vancouver, BC, Dec.~11--14 2011.

\bibitem[Castillo et~al.(2006)Castillo, Guijarro-Berdi{\~n}as, Fontenla-Romero,
  and Alonso-Betanzos]{Castil_06a}
E.~Castillo, B.~Guijarro-Berdi{\~n}as, O.~Fontenla-Romero, and
  A.~Alonso-Betanzos.
\newblock A very fast learning method for neural networks based on sensitivity
  analysis.
\newblock \emph{Journal of Machine Learning Research}, 7:\penalty0 1159--1182,
  July 2006.

\bibitem[Combettes and Pesquet(2011)]{CombetPesquet11a}
P.~L. Combettes and J.-C. Pesquet.
\newblock Proximal splitting methods in signal processing.
\newblock In H.~H. Bauschke, R.~S. Burachik, P.~L. Combettes, V.~Elser, D.~R.
  Luke, and H.~Wolkowicz, editors, \emph{Fixed-Point Algorithms for Inverse
  Problems in Science and Engineering}, Springer Series in Optimization and Its
  Applications, pages 185--212. Springer-Verlag, 2011.

\bibitem[Craig(2004)]{Craig04a}
J.~J. Craig.
\newblock \emph{Introduction to Robotics. {Mechanics} and Control}.
\newblock Prentice-Hall, third edition, 2004.

\bibitem[Dempster et~al.(1977)Dempster, Laird, and Rubin]{Dempst_77a}
A.~P. Dempster, N.~M. Laird, and D.~B. Rubin.
\newblock Maximum likelihood from incomplete data via the \emph{EM} algorithm.
\newblock \emph{Journal of the Royal Statistical Society, B}, 39\penalty0
  (1):\penalty0 1--38, 1977.

\bibitem[Erdogmus et~al.(2005)Erdogmus, Fontenla-Romero, Principe,
  Alonso-Betanzos, and Castillo]{Erdogm_05a}
D.~Erdogmus, O.~Fontenla-Romero, J.~C. Principe, A.~Alonso-Betanzos, and
  E.~Castillo.
\newblock Linear-least-squares initialization of multilayer perceptrons through
  backpropagation of the desired response.
\newblock \emph{IEEE Trans. Neural Networks}, 16\penalty0 (2):\penalty0
  325--337, Mar. 2005.

\bibitem[Erhan et~al.(2009)Erhan, Manzagol, Bengio, Bengio, and
  Vincent]{Erhan_09a}
D.~Erhan, P.~A. Manzagol, Y.~Bengio, S.~Bengio, and P.~Vincent.
\newblock The difficulty of training deep architectures and the effect of
  unsupervised pre-training.
\newblock In \emph{Proc. of the 12th Int. Workshop on Artificial Intelligence
  and Statistics (AISTATS 2009)}, pages 153--160, Clearwater Beach, FL,
  Mar.~21--24 2009.

\bibitem[Gifi(1990)]{Gifi90a}
A.~Gifi.
\newblock \emph{Nonlinear Multivariate Analysis}.
\newblock John Wiley \& Sons, 1990.

\bibitem[Gold and Morgan(2000)]{GoldMorgan99a}
B.~Gold and N.~Morgan.
\newblock \emph{Speech and Audio Signal Processing: Processing and Perception
  of Speech and Music}.
\newblock John Wiley \& Sons, New York, London, Sydney, 2000.

\bibitem[Grossman et~al.(1988)Grossman, Meir, and Domany]{Grossm_88a}
T.~Grossman, R.~Meir, and E.~Domany.
\newblock Learning by choice of internal representations.
\newblock \emph{Complex Systems}, 2\penalty0 (5):\penalty0 555--575, 1988.

\bibitem[Hastie et~al.(2009)Hastie, Tibshirani, and Friedman]{Hastie_09a}
T.~J. Hastie, R.~J. Tibshirani, and J.~H. Friedman.
\newblock \emph{The Elements of Statistical Learning---Data Mining, Inference
  and Prediction}.
\newblock Springer Series in Statistics. Springer-Verlag, second edition, 2009.

\bibitem[Hinton and Salakhutdinov(2006)]{HintonSalakh06a}
G.~E. Hinton and R.~R. Salakhutdinov.
\newblock Reducing the dimensionality of data with neural networks.
\newblock \emph{Science}, 313\penalty0 (5786):\penalty0 504--507, July~28 2006.

\bibitem[Hull(1994)]{Hull94a}
J.~J. Hull.
\newblock A database for handwritten text recognition research.
\newblock \emph{IEEE Trans. Pattern Analysis and Machine Intelligence},
  16\penalty0 (5):\penalty0 550--554, May 1994.

\bibitem[Kavukcuoglu et~al.(2008)Kavukcuoglu, Ranzato, and {LeCun}]{Kavukc_08a}
K.~Kavukcuoglu, M.~Ranzato, and Y.~{LeCun}.
\newblock Fast inference in sparse coding algorithms with applications to
  object recognition.
\newblock Technical Report \mbox{CBLL--TR--2008--12--01}, Dept. of Computer
  Science, New York University, Dec.~4 2008.

\bibitem[Krogh et~al.(1990)Krogh, Thorbergsson, and Hertz]{Krogh_90a}
A.~Krogh, C.~J. Thorbergsson, and J.~A. Hertz.
\newblock A cost function for internal representations.
\newblock In D.~S. Touretzky, editor, \emph{Advances in Neural Information
  Processing Systems (NIPS)}, volume~2, pages 733--740. Morgan Kaufmann, San
  Mateo, CA, 1990.

\bibitem[Lawrence(2005)]{Lawren05a}
N.~Lawrence.
\newblock Probabilistic non-linear principal component analysis with {Gaussian}
  process latent variable models.
\newblock \emph{Journal of Machine Learning Research}, 6:\penalty0 1783--1816,
  Nov. 2005.

\bibitem[{LeBlanc} and Tibshirani(1994)]{LeblanTibshir94a}
M.~{LeBlanc} and R.~Tibshirani.
\newblock Adaptive principal surfaces.
\newblock \emph{J. Amer. Stat. Assoc.}, 89\penalty0 (425):\penalty0 53--64,
  Mar. 1994.

\bibitem[{LeCun} et~al.(1998){LeCun}, Bottou, Bengio, and Haffner]{Lecun_98a}
Y.~{LeCun}, L.~Bottou, Y.~Bengio, and P.~Haffner.
\newblock Gradient-based learning applied to document recognition.
\newblock \emph{Proc. IEEE}, 86\penalty0 (11):\penalty0 2278--2324, Nov. 1998.

\bibitem[Li(1991)]{Li91a}
K.-C. Li.
\newblock Sliced inverse regression for dimension reduction.
\newblock \emph{J. Amer. Stat. Assoc.}, 86\penalty0 (414):\penalty0 316--327
  (with comments, pp.~328--342), June 1991.

\bibitem[Ma et~al.(1997)Ma, Ji, and Farmer]{Ma_97a}
S.~Ma, C.~Ji, and J.~Farmer.
\newblock An efficient {EM}-based training algorithm for feedforward neural
  networks.
\newblock \emph{Neural Networks}, 10\penalty0 (2):\penalty0 243--256, Mar.
  1997.

\bibitem[Meinicke et~al.(2005)Meinicke, Klanke, Memisevic, and
  Ritter]{Meinic_05a}
P.~Meinicke, S.~Klanke, R.~Memisevic, and H.~Ritter.
\newblock Principal surfaces from unsupervised kernel regression.
\newblock \emph{IEEE Trans. Pattern Analysis and Machine Intelligence},
  27\penalty0 (9):\penalty0 1379--1391, Sept. 2005.

\bibitem[Nene et~al.(1996)Nene, Nayar, and Murase]{Nene_96a}
S.~A. Nene, S.~K. Nayar, and H.~Murase.
\newblock Columbia object image library {(COIL-20)}.
\newblock Technical Report \mbox{CUCS--005--96}, Dept. of Computer Science,
  Columbia University, Feb. 1996.

\bibitem[Ngiam et~al.(2011)Ngiam, Chen, Koh, and Ng]{Ngiam_11a}
J.~Ngiam, Z.~Chen, P.~W. Koh, and A.~Ng.
\newblock Learning deep energy models.
\newblock In L.~Getoor and T.~Scheffer, editors, \emph{Proc. of the 28th Int.
  Conf. Machine Learning (ICML 2011)}, pages 1105--1112, Bellevue, WA, June~28
  -- July~2 2011.

\bibitem[Nocedal and Wright(2006)]{NocedalWright06a}
J.~Nocedal and S.~J. Wright.
\newblock \emph{Numerical Optimization}.
\newblock Springer Series in Operations Research and Financial Engineering.
  Springer-Verlag, New York, second edition, 2006.

\bibitem[Olshausen and Field(1996)]{OlshausField96a}
B.~A. Olshausen and D.~J. Field.
\newblock Emergence of simple-cell receptive field properties by learning a
  sparse code for natural images.
\newblock \emph{Nature}, 381\penalty0 (6583):\penalty0 607--609, June~13 1996.

\bibitem[Olshausen and Field(1997)]{OlshausField97a}
B.~A. Olshausen and D.~J. Field.
\newblock Sparse coding with an overcomplete basis set: A strategy employed by
  {V1}?
\newblock \emph{Vision Res.}, 37\penalty0 (23):\penalty0 3311--3325, Dec. 1997.

\bibitem[Orr and M{\"u}ller(1998)]{OrrMueller98a}
G.~B. Orr and K.-R. M{\"u}ller, editors.
\newblock \emph{Neural Networks: Tricks of the Trade}, volume 1524 of
  \emph{Lecture Notes in Computer Science}.
\newblock Springer-Verlag, Berlin, 1998.

\bibitem[Ranzato et~al.(2007{\natexlab{a}})Ranzato, Huang, Boureau, and
  {LeCun}]{Ranzat_07b}
M.~Ranzato, F.~J. Huang, Y.~L. Boureau, and Y.~{LeCun}.
\newblock Unsupervised learning of invariant feature hierarchies with
  applications to object recognition.
\newblock In \emph{Proc. of the 2007 IEEE Computer Society Conf. Computer
  Vision and Pattern Recognition (CVPR'07)}, pages 1--8, Minneapolis, MN,
  June~18--23 2007{\natexlab{a}}.

\bibitem[Ranzato et~al.(2007{\natexlab{b}})Ranzato, Poultney, Chopra, and
  {LeCun}]{Ranzat_07a}
M.~Ranzato, C.~Poultney, S.~Chopra, and Y.~{LeCun}.
\newblock Efficient learning of sparse representations with an energy-based
  model.
\newblock In B.~Sch{\"o}lkopf, J.~Platt, and T.~Hofmann, editors,
  \emph{Advances in Neural Information Processing Systems (NIPS)}, volume~19,
  pages 1137--1144. MIT Press, Cambridge, MA, 2007{\natexlab{b}}.

\bibitem[Riesenhuber and Poggio(1999)]{RiesenPoggio99a}
M.~Riesenhuber and T.~Poggio.
\newblock Hierarchical models of object recognition in cortex.
\newblock \emph{Nat. Neurosci.}, 2\penalty0 (11):\penalty0 1019--1025, Nov.
  1999.

\bibitem[Rockafellar(1976)]{Rockaf76b}
R.~T. Rockafellar.
\newblock Monotone operators and the proximal point algorithm.
\newblock \emph{SIAM J. Control and Optim.}, 14\penalty0 (5):\penalty0
  877--898, 1976.

\bibitem[R{\"o}gnvaldsson(1994)]{Roegnv94a}
T.~R{\"o}gnvaldsson.
\newblock On {Langevin} updating in multilayer perceptrons.
\newblock \emph{Neural Computation}, 6\penalty0 (5):\penalty0 916--926, Sept.
  1994.

\bibitem[Rohwer(1990)]{Rohwer90a}
R.~Rohwer.
\newblock The `moving targets' training algorithm.
\newblock In D.~S. Touretzky, editor, \emph{Advances in Neural Information
  Processing Systems (NIPS)}, volume~2, pages 558--565. Morgan Kaufmann, San
  Mateo, CA, 1990.

\bibitem[Roweis and Saul(2000)]{RoweisSaul00a}
S.~T. Roweis and L.~K. Saul.
\newblock Nonlinear dimensionality reduction by locally linear embedding.
\newblock \emph{Science}, 290\penalty0 (5500):\penalty0 2323--2326, Dec.~22
  2000.

\bibitem[Rumelhart et~al.(1986{\natexlab{a}})Rumelhart, Hinton, and
  Williams]{Rumelh_86c}
D.~E. Rumelhart, G.~E. Hinton, and R.~J. Williams.
\newblock Learning representations by back-propagating errors.
\newblock \emph{Nature}, 323:\penalty0 533--536, 1986{\natexlab{a}}.

\bibitem[Rumelhart et~al.(1986{\natexlab{b}})Rumelhart, Hinton, and
  Williams]{Rumelh_86d}
D.~E. Rumelhart, G.~E. Hinton, and R.~J. Williams.
\newblock Learning internal representations by error propagation.
\newblock In D.~E. Rumelhart and J.~L. {MacClelland}, editors, \emph{Parallel
  Distributed Computing: Explorations in the Microstructure of Cognition.
  {Vol.~1}: Foundations}, chapter~8, pages 318--362. MIT Press, Cambridge, MA,
  1986{\natexlab{b}}.

\bibitem[Saad and Marom(1990)]{SaadMarom90a}
D.~Saad and E.~Marom.
\newblock Learning by choice of internal representations: An energy
  minimization approach.
\newblock \emph{Complex Systems}, 4\penalty0 (1):\penalty0 107--118, 1990.

\bibitem[Saon and Chien(2012)]{SaonChien12a}
G.~Saon and J.-T. Chien.
\newblock Large-vocabulary continuous speech recognition systems: A look at
  some recent advances.
\newblock \emph{IEEE Signal Processing Magazine}, 29\penalty0 (6):\penalty0
  18--33, Nov. 2012.

\bibitem[Sch{\"o}lkopf and Smola(2001)]{SchoelSmola01a}
B.~Sch{\"o}lkopf and A.~J. Smola.
\newblock \emph{Learning with Kernels. {Support} Vector Machines,
  Regularization, Optimization, and Beyond}.
\newblock Adaptive Computation and Machine Learning Series. MIT Press,
  Cambridge, MA, 2001.

\bibitem[Sch{\"o}lkopf et~al.(1998)Sch{\"o}lkopf, Smola, and
  M{\"u}ller]{Schoel_98a}
B.~Sch{\"o}lkopf, A.~Smola, and K.-R. M{\"u}ller.
\newblock Nonlinear component analysis as a kernel eigenvalue problem.
\newblock \emph{Neural Computation}, 10\penalty0 (5):\penalty0 1299--1319, July
  1998.

\bibitem[Serre et~al.(2007)Serre, Wolf, Bileschi, Riesenhuber, and
  Poggio]{Serre_07a}
T.~Serre, L.~Wolf, S.~Bileschi, M.~Riesenhuber, and T.~Poggio.
\newblock Robust object recognition with cortex-like mechanisms.
\newblock \emph{IEEE Trans. Pattern Analysis and Machine Intelligence},
  29\penalty0 (3):\penalty0 411--426, Mar. 2007.

\bibitem[Smola et~al.(2001)Smola, Mika, Sch{\"o}lkopf, and
  Williamson]{Smola_01a}
A.~J. Smola, S.~Mika, B.~Sch{\"o}lkopf, and R.~C. Williamson.
\newblock Regularized principal manifolds.
\newblock \emph{Journal of Machine Learning Research}, 1:\penalty0 179--209,
  June 2001.

\bibitem[Tan and Mavrovouniotis(1995)]{TanMavrov95a}
S.~Tan and M.~L. Mavrovouniotis.
\newblock Reducing data dimensionality through optimizing neural network
  inputs.
\newblock \emph{AIChE Journal}, 41\penalty0 (6):\penalty0 1471--1479, June
  1995.

\bibitem[Tenenbaum et~al.(2000)Tenenbaum, de~Silva, and Langford]{Tenenb_00a}
J.~B. Tenenbaum, V.~de~Silva, and J.~C. Langford.
\newblock A global geometric framework for nonlinear dimensionality reduction.
\newblock \emph{Science}, 290\penalty0 (5500):\penalty0 2319--2323, Dec.~22
  2000.

\bibitem[Wang and Carreira-Perpi{\~n}{\'a}n(2012)]{WangCarreir12a}
W.~Wang and M.~{\'A}. Carreira-Perpi{\~n}{\'a}n.
\newblock Nonlinear low-dimensional regression using auxiliary coordinates.
\newblock In N.~Lawrence and M.~Girolami, editors, \emph{Proc. of the 15th Int.
  Workshop on Artificial Intelligence and Statistics (AISTATS 2012)}, pages
  1295--1304, La Palma, Canary Islands, Spain, Apr.~21--23 2012.

\bibitem[Werbos(1974)]{Werbos74a}
P.~J. Werbos.
\newblock \emph{Beyond Regression: New Tools for Prediction and Analysis in the
  Behavioral Sciences}.
\newblock PhD thesis, University of Harvard, 1974.

\end{thebibliography}

\end{document}